\documentclass[opre,nonblindrev]{informs3aa} 
\usepackage{informs_set}
\newif\ifcomm
\commtrue

\usepackage{comment}
\usepackage{caption}
\usepackage{mathtools}
\usepackage{hyperref}
\usepackage{epsfig}
\usepackage{subcaption}
\usepackage[table,xcdraw]{xcolor}
\usepackage{bbm}
\usepackage{url}
\usepackage{algorithm}
\usepackage{times}
\usepackage{amsmath, graphicx, latexsym, amssymb,  amsfonts}

\definecolor{teal}{rgb}{0.0, 0.5, 0.5}

\newcommand{\twopartdef}[4]
{
	\left\{
		\begin{array}{ll}
			#1, & \quad\mbox{if } #2, \\
			#3, & \quad\mbox{if } #4
		\end{array}
	\right.
}



\newcommand{\Prob}{\mathbf{P}}


\DoubleSpacedXI 

\usepackage{natbib}
 \bibpunct[, ]{(}{)}{,}{a}{}{,}%
 %
 %
 %
 %
 %

\begin{document}


\RUNAUTHOR{Mendelson and Tadmor}

\RUNTITLE{Fooling Algorithms in Non-Stationary MABs}

\TITLE{Fooling Algorithms in Non-Stationary Bandits using Belief Inertia}

\ARTICLEAUTHORS{%
\AUTHOR{Gal Mendelson}
\AFF{Industrial and Systems Engineering, North Carolina State University,  \EMAIL{gmendel@ncsu.edu}}

\AUTHOR{Eyal Tadmor}
\AFF{Data and Decision Sciences, Technion, Israel, \EMAIL{eyal.tadmor@technion.ac.il}}
} 

\ABSTRACT{%
We study the problem of worst-case regret in piecewise-stationary multi-armed bandits.  
While the minimax theory for stationary bandits is well established, understanding analogous limits in time-varying settings is challenging.  
Existing lower bounds rely on what we refer to as \emph{infrequent sampling} arguments, where long intervals without exploration allow adversarial reward changes that induce large regret.

In this paper, we introduce a fundamentally different approach based on a \emph{belief inertia argument}.  
Our analysis captures how an algorithm’s empirical beliefs, encoded through historical reward averages, create momentum that resists new evidence after a change.  
We show how this inertia can be exploited to construct adversarial instances that mislead classical algorithms such as Explore-Then-Commit, $\epsilon$-greedy, and UCB, causing them to suffer regret that grows linearly with~$T$ and with a substantial constant factor, regardless of how their parameters are tuned, even with a single change point.

We extend the analysis to algorithms that periodically restart to handle non-stationarity and prove that, even then, the worst-case regret remains linear in $T$.  
Our results indicate that utilizing belief inertia can be a powerful method for deriving sharp lower bounds in non-stationary bandits.

}%




\KEYWORDS{Multi-Armed-Bandits, Non-stationarity, Worst Case Performance} 

\maketitle

\section{Introduction}\label{sec:Intro}
A multi-armed bandit (MAB) is a sequential decision problem in which, at each round \(t=1,\dots,T\), an agent selects one of several actions (``arms''), observes a stochastic reward, and seeks to maximize the total reward over a fixed horizon \(T\).
Performance is typically measured by the \emph{regret}, the gap between what an optimal fixed action could have earned in hindsight and what the algorithm actually earns.
Writing \(A_t\) for the arm chosen at time \(t\) and \(r_t(a)\) for the (random) reward of arm \(a\) at time \(t\), the (expected) regret is
\[
R_T := \mathbb{E}\!\left[\sum_{t=1}^T r_t(a^\star_t) - \sum_{t=1}^T r_t(A_t)\right],
\qquad
a^\star_t \in \arg\max_{a}\,  \mathbb{E}[r_t(a)].
\]
Here an ``algorithm'' (or policy) is a (possibly randomized) mapping from the history of past actions and observed rewards to the next action.
MABs are a fundamental model for decisions under uncertainty because they formalize the core trade-off between \emph{exploration}, i.e., trying arms to learn their payoffs, and \emph{exploitation}, i.e., favoring arms that have performed well so far. 

Most classical results consider \emph{static} (stationary) environments in which each arm’s reward distribution is time-invariant.
A key minimax result in this setting shows that, for any algorithm, there exists an instance on which the regret satisfies
\[
R_T \;\ge\; \frac{1}{20}\,\sqrt{KT},
\]
where $K$ is the number of arms and assuming that $T>K$ \citep{auer2002nonstochastic}, Theorem 5.1.
Thus, no algorithm can achieve a sub-\(\sqrt{KT}\) rate uniformly over all stationary bandit problems. This serves as a \emph{worst case lower bound} for the regret.



However, many real-world applications of MABs involve environments that are inherently non-stationary~\citep{auer2002nonstochastic,besbes2014stochastic, garivier2011on, besbes2019optimal}. 
This motivates the question of whether analogous worst-case lower bounds can be established in such settings. 
To formalize non-stationarity, we focus on problems in which the reward distributions are allowed to change up to $\Gamma_T$ times over the horizon $[0,T]$. 
These change points, often called \emph{breakpoints}, partition the horizon into intervals within which the distributions remain fixed. 
This model is commonly referred to as the \emph{switching bandit}~\citep{garivier2011on} or \emph{piecewise-stationary bandit}~\citep{cao2019nearly} framework. 
In this paper, we aim to characterize worst-case lower bounds on regret that scale with $\Gamma_T$, thereby extending the minimax theory of bandits beyond the stationary case.

The basic methods used to derive lower bounds in this non-stationary setting rely on what we refer to as an \emph{infrequent sampling argument}. 
The intuition is as follows: if an algorithm performs well overall, then it must sample suboptimal arms only infrequently. 
Consequently, there must exist long intervals during which changes to the distribution of an unsampled suboptimal arm will go undetected. 
If, during such an interval, the arm becomes optimal, the algorithm will continue to ignore it and incur regret over a prolonged period of time.  

As a simple illustration, consider a two-armed bandit problem with deterministic rewards $(0.5,0)$. 
Take a deterministic algorithm such as the Upper Confidence Bound (UCB) algorithm. 
It is straightforward to show that in this case UCB incurs regret of order $O\left( \ln(T)\right)$, which is proportional to the number of times the suboptimal arm is selected. 
Consequently, there must exist an interval of length at least of order $\Omega\left(T / \ln(T)\right)$ during which the suboptimal arm is never chosen. 
Now consider a modified instance in which the rewards switch to $(0.5,1)$ during such an interval.
In this case, the algorithm suffers regret on the order of $\Omega\left(T / \ln(T)\right)$. 
Thus, a single breakpoint is sufficient to increase the regret from $\ln(T)$ to $T / \ln(T)$.
Using a similar approach, \cite{wei2016tracking} establish a general lower bound of $\Omega\left(\sqrt{\Gamma_T T}\right)$, and \cite{garivier2011on} prove that if an algorithm achieves regret of order $O(f(T))$, then it necessarily admits a worst-case lower bound of order $\Omega \left( 1/f(t)\right)$. 

In this paper, we introduce a fundamentally different approach, which relies on what we refer to as a \emph{belief inertia argument}, to proving worst-case lower bounds for non-stationary MABs. 
Rather than relying solely on the fact that suboptimal arms are sampled infrequently, we leverage the deeper reason for this phenomenon: 
after sufficient exploration, the algorithm \emph{believes} that these arms are not optimal. 
This belief is typically encoded through empirical averages of observed rewards. 
Once these averages are based on enough samples, it requires many additional observations to overturn the algorithm’s conviction, even if the arm’s reward distribution changes so that the arm becomes optimal. 
We show how to construct adversarial instances that exploit this inertia, misleading the algorithm into ignoring optimal arms for long periods of time, thereby inducing large worst-case regret. 

As a consequence, we obtain significantly stronger lower bounds than those previously known.  
In particular, we show that even with a single change, one can construct instances that fool the classical algorithms, Explore-Then-Commit, $\epsilon$-greedy, and Upper-Confidence-Bound, causing them to incur regret that grows linearly with the time horizon (which is the worst possible rate) with a substantial constant factor, regardless of how their parameters are tuned. Figure \ref{figure_1} depicts the behavior of UCB given an instance we use our approach to construct, detailed in Section \ref{subsec:UCB_T}. We fool the algorithm to believe a sub optimal arm is optimal and subsequently fails to recover.

\begin{figure}[h]
  \centering
  \includegraphics[width=\linewidth]{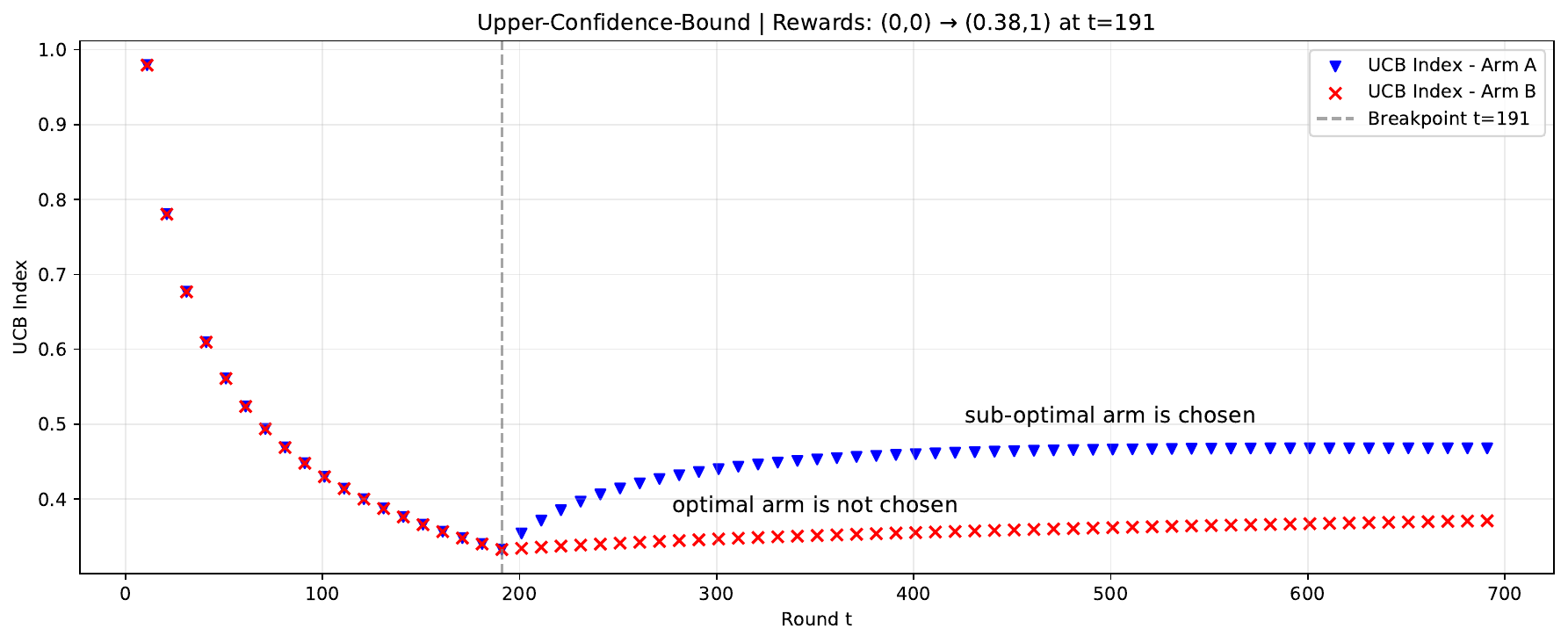}
  \caption{UCB indices in a two-armed bandit with deterministic rewards that switch from $(0, 0)$ to $(0.38,1)$ at round 191. Following the change, the suboptimal arm B experiences a sharp increase in its index, which remains higher than that of the optimal arm A for the remainder of the rounds. }
  \label{figure_1}
\end{figure}

Moreover, we use our framework to analyze algorithms that address non-stationarity by \emph{periodically restarting}. First, as a general result, we prove that for \emph{any} algorithm that performs~$d$ restarts, there exists a \emph{stationary} instance (i.e., one with no changes) on which the regret is at least~$\tfrac{1}{20}\sqrt{K d T}$. This result quantifies the cost of employing restarts as a safeguard against possible changes; showing that, in the worst case, it incurs at least a~$\sqrt{d}$-fold increase in regret. 
Second, we show that the worst-case regret remains linear in~$T$ for the previously discussed classical algorithms, even when restarts are allowed. In fact, we prove a general result: for any algorithm whose worst-case regret is linear under a single change, the worst-case regret with restarts is also linear. All of our results are explicit, i.e., not just asymptotic.

Taken together, these results demonstrate the power of the \emph{belief inertia} argument as a method for understanding and deriving worst-case performance guarantees for multi-armed bandit algorithms in time-varying environments.


\section{Paper Organization}
\label{sec:organization}

Section~\ref{sec:related} reviews related prior work on stationary and non-stationary bandits. 
Section \ref{sec:model} presents the model and problem formulation. Section \ref{sec:classical_algos} is concerned with utilizing belief inertia to fool the classical algorithms Explore-Then-Commit, $\epsilon$-greedy, and Upper-Confidence-Bound (with known and unknown time horizons).
 Finally, in Section \ref{sec:restart} we analyze algorithms that utilize restarting as a method of handling non-stationarity.


\section{Related Work}\label{sec:related}

The study of \emph{regret lower bounds} in multi-armed bandits plays a central role in understanding the fundamental limits of sequential decision-making under uncertainty.  
Lower bounds establish what performance is \emph{impossible} to improve uniformly across all problem instances, and hence serve as a benchmark for evaluating whether specific algorithms are minimax optimal.  
They also reveal the structural sources of difficulty in learning, such as exploration costs, delayed information, or non-stationarity, that no algorithm can entirely overcome.

In the classical \emph{stationary} setting, where each arm’s reward distribution remains fixed over time, this theory is well developed.  
Lai and Robbins~\citep{lai1985asymptotically} established the first asymptotic lower bounds for stochastic bandits, showing that any uniformly good policy must pull each suboptimal arm at least logarithmically many times. Auer et~al.~\citep{auer2002nonstochastic} introduced the worst case lower bound of order $\Omega(\sqrt{KT})$.

In contrast, \emph{non-stationary} or \emph{time-varying} environments pose a more subtle challenge.  
Here, the reward distributions may change over time, modeling phenomena such as user preference shifts, evolving markets, or dynamic network conditions.  
To formalize this, researchers introduced models such as the \emph{switching bandit}~\citep{garivier2008upper,garivier2011on} and the more general \emph{variation budget} or \emph{drifting bandit} frameworks~\citep{besbes2014stochastic,besbes2019optimal}.  
The switching bandit setting is typically characterized by a parameter $\Gamma_T$ denoting the number of change points (or a total variation constraint), and algorithms are designed to adapt accordingly.

Methods designed for non-stationary environments include Discounted UCB~\citep{garivier2008upper}, Sliding-Window UCB (SW-UCB)~\citep{garivier2011on}, and EXP3.S~\citep{auer2002nonstochastic}, each incorporating mechanisms such as discounting or limited-memory estimation to ``forget'' outdated data.  
Adaptive approaches based on change-point detection have also been explored~\citep{cao2019nearly,shapiro2013thompson,hartland2007change,ghatak2020change}.  
While this line of work focuses on designing algorithms that effectively adapt to changing environments, our work instead investigates the fundamental performance limits that algorithms cannot overcome. 

While several results on worst-case lower bounds exist for the switching bandit setting, the picture remains far from complete.  
As described in the introduction, \citet{wei2016tracking} established a general lower bound of $\Omega\!\left(\sqrt{\Gamma_T T}\right)$, which does not depend on the number of arms $K$ and does not characterize the behavior of specific algorithms, as we do in this work.  
In addition, \citet{garivier2011on} proved that if an algorithm achieves regret of order $O(f(T))$, then it necessarily admits a worst-case lower bound of order $\Omega\!\left(1/f(T)\right)$.  
However, this result (1) does not capture any dependence on the number of possible breakpoints $\Gamma_T$, and (2) is only informative for algorithms that already perform exceptionally well in stationary environments.  
By contrast, our method, based on analyzing the dynamics of specific algorithms, yields sharp lower bounds.



\section{Model}\label{sec:model}

We consider a non-stationary Multi-Armed Bandit (MAB) problem. 
A problem instance is characterized by $K$ arms, a time horizon $T$, and a player (or algorithm) who, at each round $t=1,\dots,T$, selects one arm and observes a (possibly stochastic) reward. 
Importantly, the reward distributions are allowed to change with time. 
We restrict attention to the \emph{piecewise-stationary} setting: the distributions remain constant between change points, but may change at most $\Gamma_T$ times over the horizon.  

The goal of the player is to maximize cumulative reward. Equivalently, performance is measured by the \emph{regret}, the cumulative difference between the rewards obtained by the player and those obtained by an oracle that always plays the best arm at each round. 
We will be interested in the scaling of regret as a function of both the horizon $T$, the number of changes $\Gamma_T$ and the number of arms $K$. Formally, regret depends on the instance and on the player’s algorithm.  
Our focus is on \emph{worst-case regret}: for a given algorithm, we establish a lower bound on regret that holds for at least one instance in a  relevant class of instances. 

\vspace{0.2cm}
\noindent\textbf{Instances.}  
A MAB instance $I$ is specified by $K$ reward processes $\{X_k(t)\}_{t=1}^T$ for $k\in [K]:=\{1,\ldots,K\}$.  
Here $X_k(t)$ denotes the reward obtained if arm $k$ is pulled at time $t$.  
We assume that $\{X_k(t)\}$ are independent across arms and rounds, but their distributions need not be identical.  
A round $t$ is called a \emph{breakpoint} if the joint distribution of $(X_1(t),\dots,X_K(t))$ differs from that at round $t-1$.  
The total number of breakpoints over the horizon is denoted by $\Gamma_T$.  
For simplicity, we assume rewards are bounded in $[0,1]$: 
\[
0 \leq X_k(t) \leq 1 \quad \text{for all } t\in[T], \; k\in[K].
\]  
All results extend straightforwardly to the case where rewards are bounded in an arbitrary interval $[a,b]$, with $a<b$.  

\vspace{0.2cm}
\noindent\textbf{Algorithms.}  
An algorithm $\pi$ is a sequence of mappings $(\pi_1,\pi_2,\ldots,\pi_T)$.  
At each round $t$, the mapping $\pi_t$ selects an arm $k_t \in [K]$ based on the observed history and possibly an external source of randomness.  
Formally, let $\xi_1,\xi_2,\dots,\xi_T$ be i.i.d.\ random variables, uniformly distributed on $[0,1]$, representing the algorithm’s randomization. Then:
\begin{align*}
\pi_1 : [0,1] &\to [K], \quad k_1 = \pi_1(\xi_1), \\
\pi_t : ([0,1]\times[K])^{t-1} \times [0,1] &\to [K], \quad
k_t = \pi_t\!\big((X_{k_1}(1),k_1),\ldots,(X_{k_{t-1}}(t-1),k_{t-1}), \xi_t\big).
\end{align*}

\vspace{0.2cm}
\noindent\textbf{Regret.}  
For each round $t$, let 
\[
a_t^\star \;\in\; \arg\max_{k \in [K]} \, \mathbb{E}[X_k(t)]
\]
denote an optimal arm at time $t$.  
The corresponding cumulative optimal reward is
\[
R^\star(I) \;=\; \sum_{t=1}^T \mathbb{E}[X_{a_t^\star}(t)].
\]
Given an algorithm $\pi$ producing actions $k_t$, the expected cumulative reward of the algorithm is
\[
R(\pi,I) \;=\; \mathbb{E}\!\left[\sum_{t=1}^T X_{k_t}(t)\right].
\]
The expected regret is then defined compactly as
\[
\mathrm{Regret}(\pi,I) \;=\; R^\star(I) - R(\pi,I).
\]
For consistency, initialization rounds (if any) are not counted toward regret; these details will be clarified where relevant.  

\vspace{0.2cm}
\noindent\textbf{Worst-case regret.}  
Let ${\cal I}_{\Gamma_T}$ denote the set of instances with at most $\Gamma_T$ breakpoints.  
The worst-case regret of an algorithm $\pi$ over this class is defined by
\[
R_{wc}(\pi,\Gamma_T) \;=\; \sup_{I \in {\cal I}_{\Gamma_T}} \mathrm{Regret}(\pi,I).
\]
It is immediate that worst-case regret is monotone in the number of breakpoints:

\begin{lemma}
For any algorithm $\pi$ and $\Gamma \in \mathbb{N}$,
\[
R_{wc}(\pi,\Gamma+1) \;\geq\; R_{wc}(\pi,\Gamma).
\]
\end{lemma}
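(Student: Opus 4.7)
The plan is to observe that this is essentially a set-inclusion argument: the supremum of a fixed function over a larger set is at least the supremum over a smaller set. Since the class ${\cal I}_{\Gamma}$ is defined as the set of instances with \emph{at most} $\Gamma$ breakpoints, every instance with at most $\Gamma$ breakpoints trivially also has at most $\Gamma+1$ breakpoints. Hence ${\cal I}_{\Gamma} \subseteq {\cal I}_{\Gamma+1}$.

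From this inclusion I would conclude directly that
\[
R_{wc}(\pi,\Gamma+1) \;=\; \sup_{I \in {\cal I}_{\Gamma+1}} \mathrm{Regret}(\pi,I) \;\geq\; \sup_{I \in {\cal I}_{\Gamma}} \mathrm{Regret}(\pi,I) \;=\; R_{wc}(\pi,\Gamma),
\]
which is exactly the claim.

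There is essentially no obstacle here; the statement is a sanity-check monotonicity property that follows immediately from how the worst-case regret and the instance classes were defined. The only thing to be slightly careful about is that the definition uses ``at most $\Gamma_T$ breakpoints'' rather than ``exactly $\Gamma_T$ breakpoints''; if it were the latter, the proof would instead proceed by taking any $I \in {\cal I}_{\Gamma}$ and constructing an $I' \in {\cal I}_{\Gamma+1}$ with the same regret by inserting a trivial ``change'' that does not actually alter the reward distributions, or by duplicating an already-active distribution at a new breakpoint. But with the stated definition no such padding is needed.
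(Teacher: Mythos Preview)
Your proposal is correct and matches the paper's treatment: the paper simply declares the lemma ``immediate'' without writing out a proof, and the set-inclusion argument you give is exactly the one-line justification that makes it so.
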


That is, allowing additional changes can only increase the worst-case regret.  
For simplicity, we assume that whenever a tie occurs (e.g., in identifying the optimal arm), it is broken uniformly at random. 


\section{Classical Algorithms}\label{sec:classical_algos}

In this section we illustrate the use of the belief inertia argument for the algorithms  \emph{Explore-Then-Commit}, $\epsilon$-greedy, and the more challenging cases of UCB with known and unknown time horizons.

\subsection{Explore-Then-Commit}
  
Denote by $\mbox{ETC}_m$ an Explore-Then-Commit (ETC) algorithm that explores each arm for $m$ rounds and then commits to the arm with the highest empirical mean.  
In stationary environments, if $m$ is carefully tuned using prior knowledge such as the time horizon $T$ and the reward gaps, then ETC can achieve near optimal regret.  
We now show that in the non-stationary setting, even when only a single breakpoint is allowed, for any choice of $m$ there exists an instance in which the regret is at least $(1 - 1/K)T$. 
This example serves as a warm-up for our method. 
While it is not surprising that one can easily mislead the Explore-Then-Commit algorithm by changing the reward distributions immediately after the exploration phase, 
this construction establishes the core intuition behind our approach and illustrates the mechanism of \emph{belief inertia} that underlies all subsequent results.

The defining feature of ETC is that, after the initial $mK$ rounds of exploration, the algorithm \emph{completely ceases to explore}.  
At this point, its decision is based solely on the empirical averages of the observed rewards.  
This creates a particularly strong form of \emph{belief inertia}: once an arm is deemed optimal, the algorithm will never revisit its decision, regardless of future changes in the reward distributions.  
We exploit this rigidity by constructing an instance where the empirically optimal arm during exploration becomes suboptimal immediately afterward. This is depicted in Figure \ref{figure_2}.

\begin{figure}[h]
  \centering
  \includegraphics[width=\linewidth]{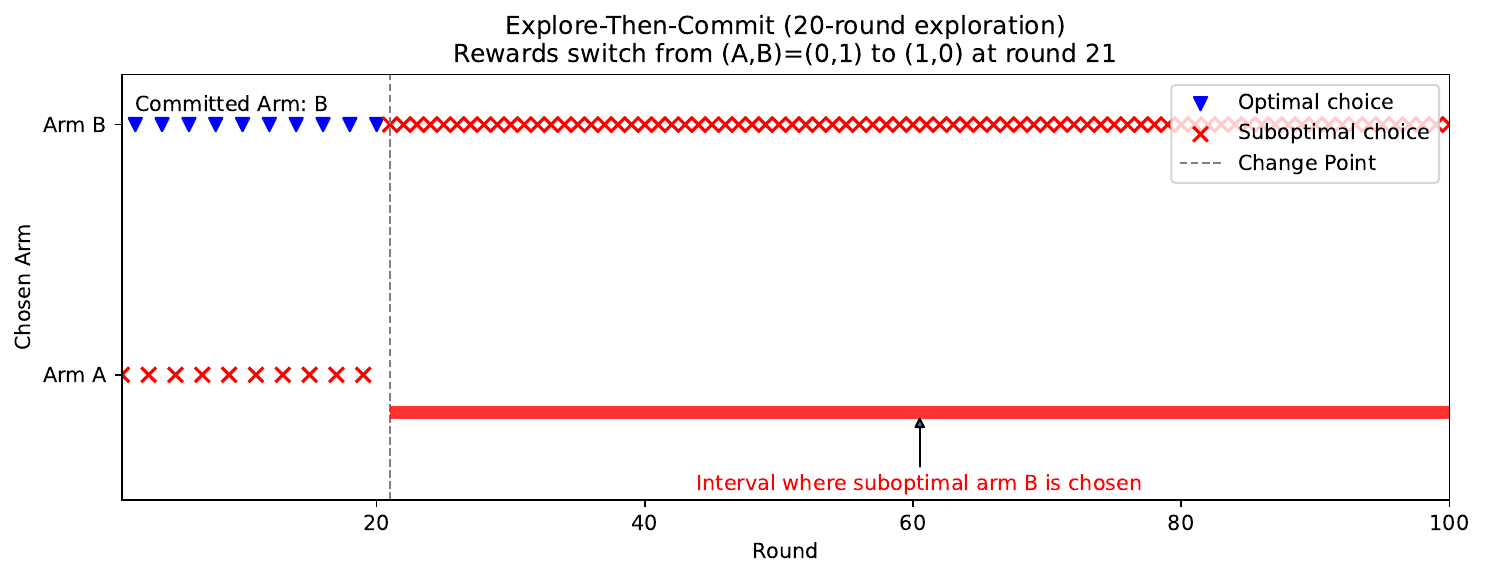}
  \caption{Explore-Then-Commit in a two-armed bandit with deterministic rewards that switch from $(0, 1)$ to $(1,0)$ at round 21. The algorithm commits to the suboptimal arm B. }
  \label{figure_2}
\end{figure}

\begin{theorem}\label{thm:ETC-K}
Fix $K\ge 2$ and $m\le T/K$. For the Explore-Then-Commit algorithm $\mbox{ETC}_m$ (which explores each arm $m$ times and then commits), the worst-case regret with a single breakpoint satisfies
\begin{equation}\label{eq:ETC_lb_K}
    R_{wc}(\mbox{ETC}_m,1)\;\;\ge\;\; T - m \;\;\ge\;\; (1-1/K)T\qquad\text{(when $m\le T/K$).}
\end{equation}
\end{theorem}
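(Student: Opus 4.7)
The plan is to exhibit a single piecewise-stationary instance with exactly one breakpoint on which $\mathrm{ETC}_m$ incurs regret $T - m$; the hypothesis $m \le T/K$ then immediately yields the bound $(1 - 1/K)T$.

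First I would construct the adversarial instance $I$ with deterministic rewards designed so that the arm looking best during exploration becomes uniquely suboptimal the instant exploration ends. For rounds $t \in \{1, \dots, mK\}$ set $X_1(t) = 1$ and $X_k(t) = 0$ for all $k \ge 2$; for rounds $t \in \{mK+1, \dots, T\}$ set $X_1(t) = 0$ and $X_k(t) = 1$ for all $k \ge 2$. The joint reward distribution is constant on each of the two sub-intervals and changes only at round $mK+1$, so $I$ has a single breakpoint and lies in $\mathcal{I}_1$. In both regimes some arm has mean reward $1$, so the oracle's cumulative reward is $R^\star(I) = T$.

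Next I would track the behavior of $\mathrm{ETC}_m$ on $I$. During the exploration phase of length $mK$, the algorithm pulls arm $1$ exactly $m$ times and each other arm exactly $m$ times, regardless of the (possibly randomized) order in which the arms are scheduled. The collected reward is $m \cdot 1 + (K-1)\,m \cdot 0 = m$, and the empirical means after exploration are $\hat{\mu}_1 = 1$ and $\hat{\mu}_k = 0$ for $k \ge 2$; since there is no tie, $\mathrm{ETC}_m$ commits deterministically to arm $1$. In rounds $mK+1, \dots, T$ arm $1$ now pays $0$, contributing nothing further to the algorithm's reward. Hence $R(\mathrm{ETC}_m, I) = m$ and $\mathrm{Regret}(\mathrm{ETC}_m, I) = T - m$, which is at least $(1 - 1/K)T$ because $m \le T/K$.

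I do not anticipate a real technical obstacle. The only point worth checking carefully is that no internal randomization in the exploration schedule can rescue the algorithm, and this is immediate because the per-arm pull counts at the end of exploration are deterministic ($m$ each), so the empirical means, the commit decision, and the collected reward are all deterministic on this instance. Conceptually, the construction is the simplest incarnation of the belief inertia principle used throughout the paper: the adversary spends the exploration budget pinning the algorithm's empirical belief on arm $1$, then reshapes the post-breakpoint landscape so that this belief is catastrophically wrong and, crucially, irreversible under the commit rule of $\mathrm{ETC}_m$.
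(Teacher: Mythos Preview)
Your proof is correct and follows essentially the same approach as the paper: exhibit a deterministic instance in which the arm with the highest empirical mean after the $mK$ exploration rounds becomes strictly suboptimal at the single breakpoint $mK+1$, yielding regret $(K-1)m + (T-mK) = T-m$. The only differences are cosmetic---the paper baits with arm~$K$ and switches to a unique optimal arm~$1$, whereas you bait with arm~$1$ and make all remaining arms optimal afterward---and your explicit remark that the exploration-phase randomization cannot affect the outcome is a nice touch the paper leaves implicit.
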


\begin{proof}
Consider an instance with $K$ arms and deterministic rewards in $\{0,1\}$. 
During the exploration phase (the first $mK$ rounds), set the rewards to
\[
(X_1(t),X_2(t),\ldots,X_{K-1}(t),X_K(t)) \;=\; (0,0,\ldots,0,1), \qquad t=1,\ldots,mK.
\]
Thus, after exactly $m$ pulls of each arm, the empirical means are $0$ for arms $1,\ldots,K-1$ and $1$ for arm $K$. 
Hence $\mbox{ETC}_m$ commits to arm $K$ at time $mK+1$.

At the single breakpoint (round $mK+1$), switch the rewards to
\[
(X_1(t),X_2(t),\ldots,X_{K-1}(t),X_K(t)) \;=\; (1,0,\ldots,0,0), \qquad t=mK+1,\ldots,T,
\]
so that arm $1$ is uniquely optimal thereafter. 
By design, $\mbox{ETC}_m$ never revisits its choice and continues to play arm $K$.

The regret decomposes as follows:
\begin{itemize}
    \item \emph{Exploration:} Each of the $m$ pulls of arms $1,\ldots,K-1$ forfeits a unit reward relative to arm $K$, contributing $(K-1)m$ regret in total.
    \item \emph{Commitment:} Over the $T-mK$ remaining rounds, the algorithm plays arm $K$ while arm $1$ yields reward $1$, contributing $T-mK$ regret.
\end{itemize}
Therefore,
\[
R_{wc}(\mbox{ETC}_m,1) \;\;\ge\;\; (K-1)m + (T - mK) \;=\; T - m.
\]
Since $m\le T/K$, the commitment phase is nonempty; moreover, if $m\le T/K$ then $T-m\ge T-T/K=(1-1/K)T$, proving \eqref{eq:ETC_lb_K}.
\end{proof}

\subsection{$\epsilon$-greedy}
We now turn to the $\epsilon$-greedy algorithm which differs from $\mbox{ETC}_m$ in a crucial way: instead of halting exploration entirely after some initial phase, $\epsilon$-greedy continues to explore throughout the horizon.  
Specifically, at each round $t$, the algorithm selects a uniformly random arm with probability $\epsilon$, and with probability $1-\epsilon$ it chooses the arm with the highest empirical average reward so far. Denote the empirical average of arm~k at time $t$ by $\hat{x}_k(t)$.

The persistent exploration makes $\epsilon$-greedy harder to analyze than ETC, since we cannot simply trap the algorithm forever into a single commitment.  
Nevertheless, we show that \emph{belief inertia} still plays a central role: the empirical averages act as anchors that are difficult to overturn, so the algorithm spends most of its exploitation rounds on a suboptimal arm once its belief has been established.  
The exploration ensures that the algorithm occasionally discovers the true optimal arm after a change, but this happens only at rate $\epsilon$, so the regret remains large.
In particular, we prove that for \emph{every} $\epsilon$ there exists an instance with 1 break-point such that the regret is at least $T/8$. 

\begin{theorem}\label{thm:epsilon_greedy}
   Fix $K\ge 2$, $\epsilon\in[0,1]$, and horizon $T>0$.  
Even with only a single break-point, the worst-case regret of $\epsilon$-greedy satisfies
\[\label{eq:eg_lb}
R_{wc}(\epsilon\text{-greedy},1) \;\;\ge\;\; T/8.
\]
\end{theorem}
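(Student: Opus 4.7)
The plan is to split the analysis on the value of $\epsilon$ and use a tailored adversarial instance in each regime. The intuition is that for large $\epsilon$ the algorithm already pays a fixed price for its uniform exploration, while for small $\epsilon$ the belief inertia mechanism just illustrated for Explore-Then-Commit can be reused even though $\epsilon$-greedy keeps exploring throughout.

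First, for $\epsilon \ge 1/4$ I would use the \emph{stationary} two-arm instance with deterministic rewards $(X_1(t),X_2(t))\equiv (0,1)$, which has zero breakpoints and thus lies in $\mathcal{I}_1$. At every round, $\epsilon$-greedy selects arm~1 with probability at least $\epsilon/2$ through the uniform exploration step, and each such pull costs a unit of regret. Hence $\mathrm{Regret}(\pi,I) \ge (\epsilon/2)T \ge T/8$, so this case is immediate.

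The core of the proof is the case $\epsilon<1/4$. Here I would use the one-breakpoint instance with deterministic rewards $(0,1)$ for $t\le T/2$ and $(1,0)$ for $t>T/2$, assuming a standard initialization so that $\hat{x}_1=0$ and $\hat{x}_2=1$ after round~2. Let $n_k$ denote the phase-1 pulls of arm~$k$ and $m_k(\tau)$ the phase-2 pulls of arm~$k$ through the first $\tau$ rounds of phase 2. Since the rewards are deterministic,
\[
\hat{x}_1(\tau) = \frac{m_1(\tau)}{n_1+m_1(\tau)},\qquad \hat{x}_2(\tau) = \frac{n_2}{n_2+m_2(\tau)},
\]
so a short cross-multiplication yields the key equivalence
\[
\hat{x}_2(\tau)>\hat{x}_1(\tau) \iff m_1(\tau)\,m_2(\tau) < n_1\,n_2,
\]
which is the precise belief-inertia condition. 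Whenever it holds, the suboptimal arm~2 is played with probability $1-\epsilon/2 > 7/8$, and every such pull costs unit regret. To close the argument, I would show that this inequality persists for $\Omega(T)$ rounds of phase 2. Using the trivial bound $m_1(\tau)\,m_2(\tau)\le \tau\, m_1(\tau)$, the expectation bound $\mathbb{E}[m_1(\tau)]\le \epsilon\tau/2$ (arm~1 is selected only by exploration while it is the empirical loser), and a standard concentration bound on $m_1(\tau)$ and on the random phase-1 counts $n_1,n_2$ (which satisfy $n_1\ge 1$ from initialization and $n_2\approx T/2$), I would conclude that belief inertia holds for a constant fraction of phase 2 with high probability, yielding $\mathbb{E}[m_2(T/2)]\ge T/8$ and hence total regret at least $T/8$.

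The main obstacle I expect is keeping the concentration step clean enough to deliver the explicit constant $1/8$. The difficulty is that $n_1$ is small when $\epsilon$ is very small, so the deterministic bound $m_1 m_2\le \tau^2/4$ is too weak; at the same time, a Markov-type bound on $m_1(\tau)m_2(\tau)/(n_1 n_2)$ requires care because numerator and denominator share the same random coins. A finer split on $\epsilon$---for example, separating the regime where $\epsilon T$ is of constant order, so that the initialization contribution dominates $n_1$, from the regime where $\epsilon T$ is already large---may be needed to make the constants work uniformly.
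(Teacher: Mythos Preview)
Your split at $\epsilon=1/4$ and the stationary exploration-cost argument for $\epsilon\ge 1/4$ are correct and in fact cleaner than what the paper does for that range (the paper never invokes exploration cost; both of its instances use a breakpoint). The trouble is entirely in your $\epsilon<1/4$ case, which now has to cover the whole interval $(0,1/4)$.

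Your product criterion $m_1(\tau)m_2(\tau)<n_1 n_2$ is the right inertia condition for the swap instance, but the route you sketch does not close. The crude bound $m_1 m_2\le \tau\,m_1$ together with $n_1\ge 1$ gives only $\tau\lesssim\sqrt{T/\epsilon}$, which is sublinear on all of $(C/T,\,1/4)$. To recover the linear break time $\tau\approx T/2$ you need $n_1$ near its mean $\epsilon T/4$, and then the inertia time is the first crossing of a random threshold by a product of two coupled binomial-type walks. Even granting concentration for moderate $\epsilon$, the regime where $\epsilon T$ is of constant order leaves $n_1$ and $m_1$ both $O(1)$ and non-concentrating; your remark that ``a finer split on $\epsilon$ may be needed'' is precisely the unresolved step, and pushing a multi-regime analysis through to the explicit constant $1/8$ is the missing work.

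The paper sidesteps the product difficulty by a different instance: arm~1 has deterministic reward $1/2$ \emph{throughout}, and only arm~2 switches from $0$ to $1$ at $T/2$. Then $\hat x_1\equiv 1/2$ is deterministic, so inertia holds iff the number of post-change pulls of arm~2 is below its pre-change count $n_2(T/2)$. Conditionally on $n_2(T/2)\sim\mathrm{Bin}(T/2,\epsilon/K)$, the inertia-breaking time is negative binomial with success probability $\epsilon/K$; a tower computation gives $\mathbb{E}[\tau]=T/2$ and $\mathrm{Var}(\tau)\le KT/\epsilon$, and the identity $2\min\{\tau,T/2\}=\tau+T/2-|\tau-T/2|$ with Cauchy--Schwarz yields regret $\ge T/4-\tfrac14\sqrt{KT/\epsilon}$. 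The paper then splits at the $T$-dependent threshold $\epsilon=4K/T$: above it this bound gives $T/8$; below it a \emph{separate} geometric waiting-time instance (change immediately after initialization, count rounds until the new optimal arm is first explored) gives $\ge T/5$. The idea you are missing is to freeze one arm's empirical mean by keeping its reward constant across the breakpoint, which collapses your two-dimensional product condition to a one-dimensional negative-binomial stopping time with exact first and second moments.
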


\begin{proof}
In this proof we construct two adversarial instances.  
The guiding idea is that for any choice of $\epsilon$, the algorithm performs poorly in at least one of them.  
The first instance addresses the case of very small $\epsilon$: during the initial rounds the algorithm identifies a single arm as optimal, and then the rewards switch drastically.  
Since exploration occurs at rate only $\epsilon/K$, it takes a long time before the algorithm even samples the new optimal arm, incurring large regret in the meantime.  
The second instance handles the remaining values of $\epsilon$: here the identity of the optimal arm changes midway through the horizon, and because of \emph{belief inertia}, the empirical average of the previously optimal arm remains anchored above that of the new optimal arm.  
As a result, the algorithm continues to exploit the wrong arm for a linear number of rounds before sufficient exploration corrects its belief.

\paragraph{Instance 1 (early two arm switch).}  
For the first $K$ initialization rounds, set the deterministic rewards to $(1,0,0,\ldots,0)$, i.e., arm~1 yields reward $1$, all others yield $0$.   
At round $K+1$ (the break-point), switch the rewards to $(0,1,0,\ldots,0)$, i.e., arm~2 is uniquely optimal thereafter.  

By belief inertia, the algorithm continues to exploit arm~1 with probability $1-\epsilon$, while relying on $\epsilon$-exploration to discover arm~2. Until it does, the algorithm incurs a regret of 1 for every round.
Since exploration is uniform over $K$ arms, the probability of sampling arm 2 in a given exploration step is $\epsilon/K$.  
Thus, letting $\tau$ denote the waiting time until arm 2 is first sampled after the break-point (assuming an infinite horizon) is geometrically distributed with parameter $\epsilon/K$.  It follows that $\Prob(\tau> t)=\left(1-\epsilon/K\right)^t$ for every $1\leq t\leq T$. The (random) time until arm~2 is first chosen or the horizon ends is given by $\tau \wedge T$ and therefore the accumulated regret is at least $\mathbb{E}[\tau \wedge T]$. We have:
    \begin{align*}
        \mathbb{E}\left[\tau \wedge T\right] &= \sum_{t=0}^{\infty} \Prob(\tau \wedge T>t) =\sum_{t=0}^{\infty} \Prob(\tau>t,\, T>t) =\sum_{t=0}^{\infty} \Prob(\tau>t)\Prob(T>t)\\
        &=\sum_{t=0}^{T-1} (1-\epsilon/K)^t= \frac{1-(1-\epsilon/K)^T}{1-(1-\epsilon/K)} = \frac{1}{\epsilon/K}\left(1-(1-\epsilon/K)^T\right)
    \end{align*}
    To proceed we will use the following inequality:
    \begin{lemma}\label{lem:Bernoulli inequality}
        Let $r\in \mathbb{N}$ and $x\geq -1$. It holds that: $(1-x)^r\leq (1+rx)^{-1}$.
    
    \end{lemma}
    \begin{proof}
    Bernoulli's inequality states that for $x>-1$ and $r\in\mathbb{N}$ we have $(1+x)^r>1+rx$.
    We can apply this inequality to derive the desired result:
        \begin{equation*}
            \frac{1}{1+rx} \geq \frac{1}{(1+x)^r} = \frac{(1-x)^r}{(1+x)^r (1-x)^r} = \frac{(1-x)^r}{(1-x^2)^r} \geq (1-x)^r.
        \end{equation*}
    \qed
    \end{proof}

    Since $T> 0$ and $\epsilon\in(0, 1)$, we can use the inequality in Lemma \ref{lem:Bernoulli inequality} by substituting $\epsilon/k$ for $x$ and $T$ for $r$. We have:
    \begin{align}\label{eq:eg_in_1_lb}
       R_{wc}(\epsilon\mbox{-greedy},1)&\geq \mathbb{E}\left[\tau \wedge T\right]\geq \frac{1}{\epsilon/K}\left(1-(1-\epsilon/K)^T\right)\\ 
        &\geq \frac{1}{\epsilon/K}\left(1-\frac{1}{1+(\epsilon/K) T}\right)=\frac{1}{\epsilon/K}\cdot\frac{(\epsilon/K) T}{1+(\epsilon/K) T}=\frac{T}{1+(\epsilon/K) T}.
    \end{align}
    
 \paragraph{Instance 2 (mid horizon one arm change).}     
 Set rewards to $(0.5,0,0,\ldots,0)$ for the first $T/2$ rounds. Thus, arm~1 is optimal with reward $0.5$, while all others yield $0$. At time $T/2+1$, switch the rewards to $(0.5,1,0,\ldots,0)$, so that arm~2 becomes optimal. Figure \ref{figure_2} depicts such an instance with 2 arms. 

\begin{figure}[h]
  \centering
  \includegraphics[width=\linewidth]{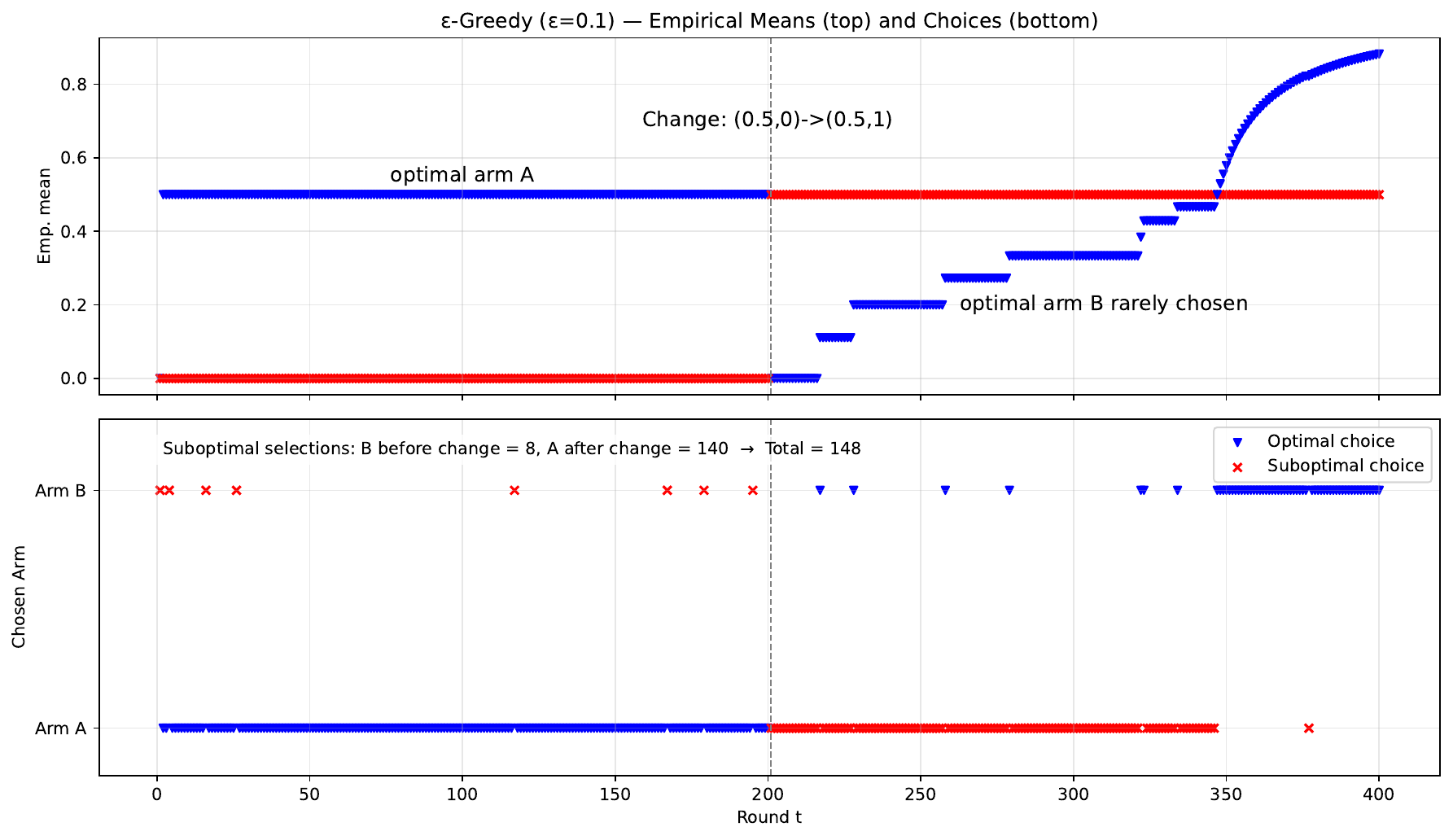}
  \caption{$\epsilon$-greedy in a two-armed bandit with deterministic rewards that switch from $(0.5, 0)$ to $(0.5, 1)$ at round 200. After the change, the algorithm’s belief updates slowly, causing the newly optimal arm to remain underexplored for most of the remaining rounds. }
  \label{figure_2}
\end{figure}

    Denote by $\tau$ the number of rounds that pass from $T/2$ until the first time such that $\hat{x}_1(t)\leq \hat{x}_2(t)$, assuming an infinite horizon:
    \begin{equation}\label{eq:eg_def_tau}
    \tau=\min{\{t>T/2: \hat{x}_1(t)\leq \hat{x}_2(t)\}}-T/2.
    \end{equation}
    Denote by $n_k(t)$ the number of times arm $k$ was chosen up to time $t$. Since the reward of arm 1 does not change, we always have $\hat{x}_1(t)=1/2$. With regard to arm 2, we have, for $t>T/2$:
    $$\hat{x}_2(t)=\frac{n_2(T/2)\times 0+(n_2(t)-n_2(T/2))\times 1}{n_2(T/2)+(n_2(t)-n_2(T/2))}=\frac{n_2(t)-n_2(T/2)}{n_2(T/2)+(n_2(t)-n_2(T/2))}=\frac{y(t)}{n_2(T/2)+y(t)},$$
    where we denoted $y(t):=n_2(t)-n_2(T/2)$, i.e., the number of times arm 2 is chosen \emph{since the change} and up to time $t$.
    Thus,
    \begin{equation}\label{eq:eg_for_tau}
        \hat{x}_1(t)\leq \hat{x}_2(t) \iff \frac{1}{2} \leq \frac{y(t)}{n_2(T/2)+y(t)} \iff n_2(T/2) \leq y(t).
    \end{equation}

    Incorporating Equation \eqref{eq:eg_for_tau} into Equation \eqref{eq:eg_def_tau} we obtain:
    $$\tau=\min{\{t>T/2: n_2(T/2) \leq y(t)\}}-T/2.$$

Now, by the definition of the $\epsilon$-greedy algorithm, $n_2(T/2)$ is a Binomial($T/2$,$\epsilon/K$) RV, and as long as $t<\tau$, arm 2 is chosen with probability $\epsilon/K$, regardless of the value of $n_2(T/2)$. Thus, for $T/2\leq t<\tau$, $y(t)$ is independent of $n_2(T/2)$ and given $n_2(T/2)$, $\tau$ is a negative binomial RV. In what follows, we set $B=n_2(T/2)$ for simplicity. Using the tower property and the law of total variance, we obtain:
\begin{equation*}
    \mathbb{E}[\tau]=\mathbb{E}\left[\mathbb{E}\left[\tau \mid B\right]\right]=\frac{1}{\epsilon/K}\mathbb{E}[B]=\frac{1}{\epsilon/K} (T/2)(\epsilon/K)=T/2,
\end{equation*}
\begin{align*}
    \mbox{Var}(\tau)&= \mathbb{E}\big[\mathrm{Var}(\tau \mid B)\big] + \mathrm{Var}\big(\mathbb{E}[\tau \mid B]\big)=
    \mathbb{E}\left[B\frac{1-\epsilon/K}{(\epsilon/K)^2}\right]+
    \mathrm{Var}\left(\frac{B}{\epsilon/K}\right)\\
    &=\frac{1-\epsilon/K}{(\epsilon/K)^2}(T/2)(\epsilon/K)+\frac{1}{(\epsilon/K)^2}(T/2)(\epsilon/K)(1-\epsilon/K)=2\frac{1-\epsilon/K}{\epsilon/K}(T/2)\\
    &\leq \frac{K}{\epsilon}T
\end{align*}

Taking into account the fact that $T/2+\tau$ can be larger than $T$, the regret is at least $0.5\min{\{\tau,T/2\}}$.
Using the identity $2\min{\{a,b\}}=a+b-|a-b|$, we obtain that the Regret is at least 
\begin{align*}
    R_{wc}(\epsilon\mbox{-greedy},1)&\geq 0.5\mathbb{E}[\min{\{\tau,T/2\}}]=0.25\mathbb{E}\left[\tau+T/2 -|\tau-T/2|\right]\cr
    &=0.25T-\mathbb{E}\left[|\tau-T/2|\right]
    \geq 0.25T-0.25\sqrt{\mathbb{E}\left[(\tau-T/2)^2\right]}\cr
    &=0.25T-0.25\sqrt{ \mbox{Var}(\tau)}\geq 0.25T-0.25\frac{\sqrt{KT}}{\sqrt{\epsilon}}.
\end{align*}
To conclude, we combine the results of the two instances:
\begin{align*}
    R_{wc}(\epsilon\mbox{-greedy},1)&\geq \max{\left\{\frac{T}{1+(\epsilon/K) T},0.25\left(T-\frac{\sqrt{KT}}{\sqrt{\epsilon}}\right)\right\}}
\end{align*}
We claim that this bound is linear in $T$ for every choice of $\epsilon$. That is because for $\epsilon > \frac{4K}{T}$, we have:
\begin{equation*}
    R_{wc}(\epsilon\mbox{-greedy},1)\geq 0.25\left(T-\frac{\sqrt{KT}}{\sqrt{\epsilon}}\right) \geq 0.25\left(T-\frac{\sqrt{KT}\sqrt{T}}{\sqrt{4K}}\right)=\frac{T}{8},
\end{equation*}
and for $\epsilon \leq \frac{4K}{T}$ we have:
\begin{equation*}
    R_{wc}(\epsilon\mbox{-greedy},1)\geq \frac{T}{1+(\epsilon/K) T} \geq \frac{T}{1+4}\geq \frac{T}{5}\geq \frac{T}{8}.
\end{equation*}
Thus, for every choice of $\epsilon$, there exists an instance for which we obtain $R_{wc}(\epsilon\mbox{-greedy},1)\geq \frac{T}{8}$.
\qed

\end{proof}



\subsection{UCB with Known Time Horizon}
\label{subsec:UCB_T}
Upper Confidence Bound (UCB) algorithms belong to the family of index-based policies.  
At each round, they compute an index for every arm and select the arm with the highest index.  
In the case of the UCB algorithm, the index of an arm is given by the empirical mean of its observed rewards plus a confidence bonus that decays as the arm is sampled more often.  
Formally, let $\hat{x}_k(t)$ denote the empirical mean reward of arm $k$ up to time $t$, and let $n_k(t)$ denote the number of times arm $k$ has been chosen by time $t$.  
The index of arm $k$ at time $t$ is defined as
\[
I_k(t) \;=\; \hat{x}_k(t) + \sqrt{\tfrac{2\ln T}{n_k(t)}}.
\]
At each round, UCB selects the arm with the maximal index, breaking ties uniformly at random.  
Notice that the time horizon $T$ appears explicitly in the confidence term, so the algorithm assumes knowledge of $T$ in advance.  
In the next section we analyze a variant that does not require knowledge of $T$.  
This mechanism balances exploitation and exploration: arms with higher empirical means are favored, but arms with fewer samples retain larger confidence bonuses, which makes them more attractive candidates for exploration.  

Analyzing UCB in the non-stationary setting is considerably more challenging than for ETC or $\epsilon$-greedy.  
For ETC, belief inertia arises because the algorithm irrevocably commits after the exploration phase.  
For $\epsilon$-greedy, inertia is softer but still evident: once an arm’s empirical mean is anchored above the rest, it dominates exploitation, while sparse exploration is too slow to quickly overturn the belief.  
In contrast, UCB maintains explicit confidence intervals that shrink as more data are collected.  
This makes the role of belief inertia subtler: once an arm has been heavily sampled, its confidence radius becomes very small, and even if the underlying distribution changes, the arm’s index may remain artificially high for a long period of time.  
Meanwhile, the confidence intervals of under-sampled arms remain wide, but without explicit restarts or forgetting mechanisms, they may not be pulled often enough to correct the algorithm’s belief.  
Thus, the challenge in lower-bounding UCB is to construct adversarial instances in which the algorithm’s reliance on accumulated data and shrinking confidence intervals leads it to persistently favor a suboptimal arm, even though its exploration mechanism is more adaptive in principle than that of ETC or $\epsilon$-greedy.  

Our adversarial construction works as follows.  
Initially, set the rewards of all $K$ arms to $0$ for a sufficiently long period.  
During this phase, the algorithm alternates among arms, collecting enough samples so that every empirical mean remains close to $0$, while the confidence bounds shrink.  
This establishes a “belief state’’ where the indices are nearly tied but the confidence radii are already small.  
At the breakpoint, the rewards change to $(1,\Delta,\ldots,\Delta)$, where $\Delta>0$ is small.  
Suppose UCB happens to select any arm other than arm~1 immediately after the change.  
Its empirical mean then jumps, and its index becomes larger than that of the other arms.  
From that point onward, the indices of the other arms remain constant, while its empirical mean increases and its confidence term shrinks.  
We choose the breakpoint time and $\Delta$ such that (1) the suboptimal arm is selected for the remainder of the horizon, and (2) the total incurred regret—equal to the remaining time multiplied by $1-\Delta$—is substantial.  
This construction demonstrates how belief inertia can manifest in UCB: past sampling shrinks the confidence interval so much that one unlucky post-change draw locks in a suboptimal arm, leading to regret that is linear in $T$.

\begin{theorem}\label{thm:UCBT}
    Fix $T$ such that $T> 4K\ln{T}$. Then:
    \begin{equation}\label{eq:ucb_thm}
      R_{wc}(\mbox{UCB}, 1)\geq(1-1/K)T\left(1-\left(\frac{2K\ln{T}}{T}\right)^{\frac{1}{3}}\right)\left(1-\left(\frac{K\ln{T}}{T}\right)^{\frac{1}{3}}\right)
        >0.07(1-1/K)T.
    \end{equation}   
\end{theorem}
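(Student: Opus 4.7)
I would introduce two free parameters, a breakpoint time $T_0$ and a post-change reward gap $\Delta\in(0,1)$, and work with the deterministic instance whose rewards are $0$ on every arm for rounds $1,\ldots,T_0$ and then switch at the single breakpoint to $(1,\Delta,\ldots,\Delta)$ for rounds $T_0+1,\ldots,T$. After the change, arm $1$ is uniquely optimal with per-round gap $1-\Delta$, and the entire proof reduces to tuning $(T_0,\Delta)$ so that (a) UCB misfires at the breakpoint, and (b) the two factors of \eqref{eq:ucb_thm} are saturated.

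\textbf{The symmetric belief state and tie-breaking.} During the first $T_0$ rounds every observed reward is $0$, so every empirical mean stays at $0$ and the UCB index collapses to the pure confidence term $\sqrt{2\ln T/n_k(t)}$. UCB therefore always pulls the least-sampled arm, producing a round-robin schedule. Assuming $K\mid T_0$ (a negligible divisibility assumption that can be absorbed into the constants), at the start of round $T_0+1$ every arm has been pulled $n:=T_0/K$ times and carries the common index $v:=\sqrt{2K\ln T/T_0}$. Uniform tie-breaking then picks some suboptimal arm $k\in\{2,\ldots,K\}$ with probability $(K-1)/K$; this is the event on which the inertia argument is set up.

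\textbf{Locking in the suboptimal arm (main obstacle).} Conditional on arm $k\neq 1$ being chosen at $T_0+1$, I would prove by induction that UCB keeps pulling arm $k$ in every subsequent round. Since no other arm is ever touched during phase~$2$, their indices stay frozen at $v$, and the induction reduces to the scalar inequality
\[
f(m)\;:=\;\frac{m\Delta}{n+m}\;+\;\sqrt{\frac{2\ln T}{n+m}}\;>\;v\qquad\text{for every }m=1,\ldots,T-T_0.
\]
Substituting $u=\sqrt{2\ln T/(n+m)}\in(0,v)$ so that $m/(n+m)=1-u^{2}/v^{2}$ and dividing through by the positive factor $v-u$ rewrites this as $\Delta(v+u)>v^{2}$, whose bottleneck occurs at the smallest value of $u$, namely $u_{\min}=\sqrt{2\ln T/(n+T-T_0)}$. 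The core technical step is therefore to verify the single inequality $\Delta(v+u_{\min})\ge v^{2}$. I expect this to be the main obstacle, because $f(m)$ is not monotone in $m$ -- its empirical-mean component climbs towards $\Delta$ while its confidence-bonus component decays to $0$ -- so the induction cannot be replaced by a one-sided monotonicity argument and the uniform-in-$m$ bottleneck analysis has to be done explicitly.

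\textbf{Regret bound and parameter choice.} Once inertia is established, the conditional post-change regret equals exactly $(T-T_0)(1-\Delta)$, so
\[
R_{wc}(\mathrm{UCB},1)\;\ge\;\Big(1-\tfrac{1}{K}\Big)(T-T_0)(1-\Delta).
\]
Matching the two bracketed factors of \eqref{eq:ucb_thm} then fixes the parametrization: take $\Delta=(2K\ln T/T)^{1/3}$ and $T_0=T(K\ln T/T)^{1/3}$, so that $1-\Delta$ and $(T-T_0)/T$ coincide with the factors in the theorem while meeting the inertia constraint $\Delta(v+u_{\min})\ge v^{2}$ up to the admissible slack controlled by $u_{\min}$. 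The concrete lower bound $R_{wc}(\mathrm{UCB},1)>0.07(1-1/K)T$ is then a pure numerical estimate: the hypothesis $T>4K\ln T$ forces $(K\ln T/T)^{1/3}<(1/4)^{1/3}$ and $(2K\ln T/T)^{1/3}<(1/2)^{1/3}$, and a direct computation gives $\bigl(1-(1/2)^{1/3}\bigr)\bigl(1-(1/4)^{1/3}\bigr)>0.07$, completing the proof.
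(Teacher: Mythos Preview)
Your construction, the round-robin/tie-breaking analysis, and the reduction of the inertia induction to the scalar condition $\Delta(v+u)>v^{2}$ are all correct and match the paper's approach (your substitution is in fact a cleaner route to the same condition the paper extracts via a quadratic in $w=\sqrt{n+m}$). The gap is in the final parameter choice.

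Set $\rho:=K\ln T/T$. With your choices $T_{0}=T\rho^{1/3}$ and $\Delta=(2\rho)^{1/3}=2^{1/3}\rho^{1/3}$ one has $n=T_{0}/K$ and $v^{2}=2\ln T/n=2K\ln T/T_{0}=2\rho^{2/3}$, hence $v=\sqrt{2}\,\rho^{1/3}>\Delta$. Then $\Delta v=2^{5/6}\rho^{2/3}<2\rho^{2/3}=v^{2}$, leaving a deficit $v^{2}-\Delta v=(2-2^{5/6})\rho^{2/3}$ of order $\rho^{2/3}$. The ``slack'' you appeal to is $\Delta\,u_{\min}$, but $u_{\min}=\sqrt{2\ln T/(n+T-T_{0})}=\Theta\!\big(\sqrt{\ln T/T}\big)=\Theta\!\big(\rho^{1/2}/\sqrt{K}\big)$, so $\Delta\,u_{\min}=\Theta(\rho^{5/6})$, which is asymptotically \emph{smaller} than $\rho^{2/3}$. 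Consequently, for all sufficiently large $T$ the constraint $\Delta(v+u_{\min})>v^{2}$ fails, and the locking-in step does not go through with these parameters. The paper avoids this by imposing the clean sufficient condition $\Delta\ge v$ (equivalently $\Delta\sqrt{n}\ge\sqrt{2\ln T}$): then $\Delta(v+u)\ge v(v+u)>v^{2}$ for every $u>0$, so the inertia lemma is automatic. They take $\Delta=v$ with equality and optimize the single remaining parameter $n=T_{0}/K$, landing near $n\approx(T\sqrt{2\ln T}/K)^{2/3}$. If you re-run your argument under the coupling $\Delta=\sqrt{2K\ln T/T_{0}}$, the optimization becomes one-dimensional and recovers the stated bound.
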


\begin{proof}
    Fix $T$ such that $T> 4K\ln{T}$.
    Fix $c\in \mathbb{N}$ such that $Kc+1 < T$. Let $\Delta\in(0,1)$. The values of $c$ and $\Delta$ will be determined later in the proof under the condition $T> 4K\ln{T}$. 
    
    \paragraph{The adversarial instance.} Assume that the $K$ arms have the following (round dependent) reward distribution:
    $$(X_1(t),X_2(t),\ldots,X_K(t))=\twopartdef{(0,0,0,\ldots,0)}{t<Kc+1}{(1,\Delta,\ldots,\Delta)}{Kc+1\leq t\leq T}.$$
    Thus, the rewards are deterministic in each round and there is a single change in the distribution of the rewards at $t=Kc+1$, such that arm~1's reward changes to 1 and the rest change to $\Delta<1$.

    \paragraph{Before the break-point.} We argue that up to (but not including) time $Kc+1$, each arm is pulled exactly $c$ times.  
    During the initialization phase, when all rewards are set to zero, the empirical means remain at zero and the indices depend only on the number of times each arm has been pulled:  
    \[
    I_k(t) \;=\; \sqrt{\tfrac{2\ln T}{n_k(t)}}.
    \]  
    At the very first round, all indices are equal, so an arm is chosen uniformly at random, and its count increases to $2$.  
    Consequently, its index becomes strictly smaller than those of the other arms.  
    This ensures that this arm will not be chosen again until every other arm has also been pulled once more.  
    This “water-filling’’ effect continues: in every block of $K$ consecutive rounds, each arm is selected exactly once (though the order within the block may vary).  
    Therefore, after $Kc$ rounds, each arm has been pulled exactly $c$ times.  

    \paragraph{After the break-point} Thus, at time $Kc+1$, a tie needs to be broken. Under the event that arm 1 is \emph{not chosen} to break the tie (with probability $1-1/K$), at time $Kc+1$, an arm is pulled but now has a reward of $\Delta$ instead of 0. Our goal is to choose the values of $c$ and $\Delta$ such that under this event the policy keeps choosing the sub-optimal arm for the remaining horizon (i.e., for all $Kc+2<t\leq T$. In such a case, the regret satisfies:
    $$R_{wc}(\mbox{UCB},1)\geq (1-1/K)(T-Kc)(1-\Delta).$$
    So, we will derive conditions for $c$ and $\Delta$ under which the policy \emph{keeps choosing} the suboptimal arm, and find $c$ and $\Delta$ that satisfy these conditions and lead to a large regret. The remaining analysis is under the event that an arm other than arm 1 is chosen to break the tie at time $Kc+1$.

\begin{lemma}\label{lem:suboptimal_arm_keeps_being_chosen}
    Assume that for all $x\in{1,\ldots,T-Kc}$ we have that
    \begin{equation}\label{eq:lemma_3}
    \frac{\Delta x}{c+x}+\sqrt{\frac{2\ln {T}}{c+x}}> \sqrt{\frac{2\ln {T}}{c}}.        
    \end{equation}
    Then arm 1 is \textbf{not chosen} for all $t\in\{Kc+1,\ldots,T\}$
\end{lemma}

\begin{proof}
    We already know that arm 1 is not chosen at time $Kc+1$. A closer look at Equation \eqref{eq:lemma_3} reveals that the right hand side (RHS) equals the UCB index of all of the arms except the one that is chosen at time $Kc+1$, and that the left hand side (LHS) equals the UCB index of the chosen arm at time $Kc+x$, if it was chosen exculsvely up until that time. Clearly, if the LHS is always larger, by a simple strong induction argument, the sub optimal arm keeps being chosen, and, in particular, arm~1 is not.
    \qed
    \end{proof}

\paragraph{Choosing legitimate $c$ and $\Delta$.}
    We are left with deriving conditions on $c$ and $\Delta$ such that the assumption in Lemma \ref{lem:suboptimal_arm_keeps_being_chosen} holds. For ease of notation, define $w:=\sqrt{c+x}$ and $\alpha:=\sqrt{2\ln{T}}$. Thus: 

    \begin{align}
        \frac{\Delta x}{c+x}+\sqrt{\frac{2\ln{T}}{c+x}}>\sqrt{\frac{2\ln{T}}{c}} &\iff \frac{\Delta (w^2-c)}{w^2}+\frac{\alpha}{w}> \alpha\frac{1}{\sqrt{c}}\cr
        &\iff 
        \Delta \sqrt{c}(w^2-c)+\alpha\sqrt{c} w> \alpha w^2\cr
        &\iff (\Delta \sqrt{c}-\alpha)w^2+\alpha\sqrt{c} w-\Delta c \sqrt{c} > 0.
        \label{eq:iff_condition_for_keep_being_chosen}
    \end{align}

    We now prove that if:
    \begin{equation}
        \Delta \sqrt{c}-\alpha \geq 0,
        \label{eq:condition_on_instance}
    \end{equation}
then the right hand side of \eqref{eq:iff_condition_for_keep_being_chosen} holds. Thus, a choice of $\Delta$ and $c$ that satisfies \eqref{eq:condition_on_instance} implies that the assumption in Lemma \ref{lem:suboptimal_arm_keeps_being_chosen} holds and arm 1 is not chosen for the rest of the interval.

    In the case that $\Delta\sqrt{c}-\alpha=0$, by \ref{eq:iff_condition_for_keep_being_chosen} we have:
    \begin{equation}
       \alpha\sqrt{c} w-\Delta c \sqrt{c} > 0 \iff w > \frac{\Delta c}{\alpha} = \frac{\Delta c}{\Delta \sqrt{c}}  = \sqrt{c}\iff \frac{\Delta x}{c+x}+\sqrt{\frac{2\ln{T}}{c+x}}> \sqrt{\frac{2\ln{T}}{c}} \label{eq:if_non_strict_inequality}
    \end{equation}

    Since $w> \sqrt{c}$ by definition (recalling that $x\geq 1$), \ref{eq:if_non_strict_inequality} implies that the assumption of Lemma \ref{lem:suboptimal_arm_keeps_being_chosen} holds when $\Delta\sqrt{c}-\alpha=0$.

    In the case that $\Delta \sqrt{c}- \alpha>0$, any $w$ larger than the largest root of the right hand side of \eqref{eq:iff_condition_for_keep_being_chosen} will suffice. Therefore:
    \begin{equation}
        w > \frac{-\alpha \sqrt{c}+\sqrt{\alpha^2c+4\Delta c \sqrt{c}( \Delta \sqrt{c}-\alpha)}}{2( \Delta \sqrt{c}-\alpha)} \rightarrow (\Delta \sqrt{c}-\alpha)w^2+\alpha\sqrt{c} w-\Delta c \sqrt{c} > 0. 
    \end{equation}

    Define $v:=\frac{\Delta \sqrt{c}-\alpha}{\alpha\sqrt{c}}$. Then:
    $$\frac{-\alpha \sqrt{c}+\sqrt{\alpha^2c+4\Delta c \sqrt{c}( \Delta \sqrt{c}-\alpha)}}{2( \Delta \sqrt{c}-\alpha)} = 
    \frac{-1+\sqrt{1+\frac{4c\Delta}{\alpha}v}}{2v},$$
    Again, since $w>\sqrt{c}$, we have that:
    \begin{equation}
        \frac{-1+\sqrt{1+\frac{4c\Delta}{\alpha}v}}{2v}\leq \sqrt{c} \rightarrow (\Delta \sqrt{c}-\alpha)w^2+\alpha\sqrt{c} w-\Delta c \sqrt{c} > 0.
    \end{equation}

    Now,
    \begin{align}
        \frac{-1+\sqrt{1+\frac{4c\Delta}{\alpha}v}}{2v}\leq \sqrt{c} &\iff 1+\frac{4c\Delta}{\alpha}v\leq(2v\sqrt{c}+1)^2=4cv^2+4v\sqrt{c}+1\cr
        &\iff \frac{\sqrt{c}\Delta}{\alpha}\leq \sqrt{c}v+1\iff v \geq \frac{\Delta\sqrt{c}-\alpha}{\alpha \sqrt{c}}.
        \label{eq:v}
    \end{align}

    But, by definition $v=\frac{\Delta \sqrt{c}-\alpha}{\alpha\sqrt{c}}$, which implies the right hand side of \eqref{eq:v} holds. We conclude that if $\Delta \sqrt{c}-\alpha>0$, then the assumption of Lemma \ref{lem:suboptimal_arm_keeps_being_chosen} holds.\\
    Therefore, if condition \ref{eq:condition_on_instance} holds, arm 1 not chosen for all $t\in\{Kc+1,\ldots,T\}$.
    Recall that under this condition the regret satisfies $$R_{wc}(\mbox{UCB},1)\geq (1-1/K)(T-Kc)(1-\Delta).$$
    We want to pick $\Delta$ and $c$ that satisfy \ref{eq:condition_on_instance} and maximize the regret. Now, notice that for a fixed $c$, the regret is decreasing with $\Delta$. Since  condition \ref{eq:condition_on_instance} is equivalent to  $\Delta \geq \alpha /\sqrt{c}$, we would like to choose 
    $\Delta = \alpha /\sqrt{c}$. 
    Define $y:=\sqrt{c}$ and 
    \begin{equation}
        f(y):=(T-Ky^2)\bigg(1-\frac{\alpha}{y}\bigg)=T-\frac{\alpha T}{y}-Ky^2+K\alpha y
    \end{equation}

    Taking derivatives with respect to $y$ yields:
    \begin{equation}
        f'(y) = \frac{\alpha T}{y^2} - 2Ky + K\alpha = \frac{1}{y^2}\left(\alpha T-2Ky^3+K\alpha y^2\right)
    \end{equation}
    
    \begin{equation}
        f''(y)=\frac{-2\alpha T}{y^3}-2K=-2\bigg(\frac{\alpha T}{y^3}+K\bigg)<0
    \end{equation}
    
    Thus, $f(y)$ is strictly concave. In addition, $f\left(\alpha\right)=f\left(\sqrt{T/K}\right)=0$, and $f$ is clearly not constant. Therefore, $f$ has a unique positive maximum, attained at some $y^*\in \left(\alpha,\sqrt{T/2}\right)$. 
The condition in the Theorem that $T> 4K\ln{T}$ ensures that $\left(\alpha,\sqrt{T/K}\right)$ is a well defined interval. Indeed, recalling that $\alpha$ was defined to equal $\sqrt{2\ln(T)}$, we note that:
$$\alpha<\sqrt{T/K} \iff \sqrt{2\ln(T)}<\sqrt{T/K}  \iff T>2K\ln(T) \leftarrow T> 4K\ln{T}.$$

Now, since $f$ is differentiable, is strictly concave and attains a unique maximum at $y^*$, we must have that $f'(y) > 0$ for $y\in[\alpha,y^*)$, $f'(y^*) = 0$ and $f'(y) < 0$ for $y\in\left(y^*,\sqrt{T/K}\right]$. 
Since we cannot find the solution to $f'(y) = 0$ explicitly, we will derive tight lower and upper bounds on $y^*$ and use them to bound the regret.

Recall that 
 \begin{equation}
        f'(y) =  \frac{1}{y^2}(\alpha T-2Ky^3+K\alpha y^2),
    \end{equation}
and note that:
 \begin{equation}
        \alpha T-2Ky^3 \leq \alpha T-2Ky^3+K\alpha y^2\leq  \alpha T-2Ky^3+Ky^3,
    \end{equation}
where the last inequality is due to the fact that $\alpha\leq y$.
We have:
    \begin{equation}
        \alpha T-2Ky^3 \geq 0 \iff y \leq \sqrt[3]{\frac{T\alpha}{2K}} \quad \Longrightarrow \quad f'(y) \geq 0,
    \end{equation}
    and
    \begin{equation}
        \alpha T-2Ky^3+Ky^3 \leq 0 \iff y \geq \sqrt[3]{\frac{T\alpha}{K}} \quad \Longrightarrow \quad f'(y) \leq 0,
    \end{equation}
    which implies that $y^* \in \bigg[\sqrt[3]{\frac{T\alpha}{2K}}, \sqrt[3]{\frac{T\alpha}{K}}\bigg]$.
    Recalling that we defined $y=\sqrt{c}$,
     we want to choose $c$, which is an integer, in the interval $\bigg[\big(\frac{T\alpha}{2K}\big)^{\frac{2}{3}}, 
    \big(\frac{T\alpha}{K}\big)^{\frac{2}{3}}\bigg]$. To make sure such an integer exists, we note that:

    \begin{align}\label{eq:c integer exists}
        \bigg(\frac{T\alpha}{K}\bigg)^{\frac{2}{3}}-\bigg(\frac{T\alpha}{2K}\bigg)^{\frac{2}{3}}&=(1-2^{-\frac{2}{3}})\bigg(\frac{T\alpha}{K}\bigg)^{\frac{2}{3}}=(1-2^{-\frac{2}{3}})\bigg(\frac{T\sqrt{2\ln{T}}}{K}\bigg)^{\frac{2}{3}}\\
        &=(1-2^{-\frac{2}{3}})2^{\frac{1}{3}}\bigg(\frac{T\ln{T}}{K}\bigg)^{\frac{2}{3}}.
    \end{align}
    The right hand side of \eqref{eq:c integer exists} is a strictly increasing function of $T$. If we substitute $T=2K+1$ and use the fact that $K\geq 2$, we obtain:
    $$(1-2^{-\frac{2}{3}})2^{\frac{1}{3}}\bigg(\frac{(2K+1)\ln{(2K+1)}}{K}\bigg)^{\frac{2}{3}}\geq (1-2^{-\frac{2}{3}})2^{\frac{1}{3}}\bigg(\frac{5\ln{5}}{2}\bigg)^{\frac{2}{3}}>1.$$

    Thus, under the condition in the Theorem of $T>4K\ln{T}$, which implies that $T> 2K$, there exist at least one integer in the interval $\bigg[\big(\frac{T\alpha}{2K}\big)^{\frac{2}{3}}, 
    \big(\frac{T\alpha}{K}\big)^{\frac{2}{3}}\bigg]$ which can be our chosen $c$. 

    Now, recall that for a chosen $c$, we would like to choose $\Delta=\alpha/\sqrt{c}$. To make sure that the restriction $\Delta< 1$ is satisfied, we now verify that $c > \alpha^2$ under $T>4K\ln{T}$:

    $$c\geq \bigg(\frac{T\alpha}{2K}\bigg)^{\frac{2}{3}}=\alpha^2 \bigg(\frac{T}{2K\alpha^{2}}\bigg)^{\frac{2}{3}}=\alpha^2 \bigg(\frac{T}{4K\ln(T)}\bigg)^{\frac{2}{3}}>\alpha^2.$$

    \paragraph{Argument summary.} To summarize and conclude the proof, we choose an integer $c$ in the interval $\bigg[\left(\frac{T\alpha}{2K}\right)^{\frac{2}{3}}, 
    \left(\frac{T\alpha}{K}\right)^{\frac{2}{3}}\bigg]$. We then choose $\Delta=\alpha/\sqrt{c}$. We are guaranteed that if $T> 4K\ln{T}$, these choices are possible. In addition, under our choice condition \eqref{eq:condition_on_instance} holds, which implies that the regret in the instance we construct is at least:
    $$R_{wc}(\mbox{UCB}, 1)\geq(1-1/K)(T-Kc)(1-\Delta).$$
    Substituting $\Delta=\alpha/\sqrt{c}$ and then replacing $c$ with the upper and lower bounds implied by the interval it was chosen from yields:
    \begin{align*}
        R_{wc}(\mbox{UCB}, 1)&\geq(1-1/K)(T-Kc)(1-\alpha/\sqrt{c})\\
        &\geq (1-1/K)\left(T-K\left(\frac{T\alpha}{K}\right)^{\frac{2}{3}}\right)\left(1-\alpha \left(\frac{T\alpha}{2K}\right)^{-\frac{1}{3}}\right)\\
        &=(1-1/K)T\left(1-\left(\frac{2K\ln{T}}{T}\right)^{\frac{1}{3}}\right)\left(1-\left(\frac{K\ln{T}}{T}\right)^{\frac{1}{3}}\right)\\
        &\geq (1-1/K)T\left(1-2^{-\frac{1}{3}}\right)\left(1-4^{-\frac{1}{3}}\right)>0.07(1-1/K)T,
    \end{align*}
    where in the next to last inequality we used the fact that $T>4K\ln{T}$.
    \qed
    
\end{proof}

\subsection{UCB with Unknown Time Horizon}

When the time horizon $T$ is unknown, the confidence radius in the UCB index replaces $T$ with the current round $t$, so that
\[
I_k(t) \;=\; \hat{x}_k(t) + \sqrt{\tfrac{2\ln t}{n_k(t)}}.
\]

Unlike the fixed-horizon case, here the index of an arm increases whenever it is not chosen, since the logarithmic term grows with $t$ while $n_k(t)$ remains fixed.  
This guarantees that every arm will eventually be selected again, which makes this variant of UCB appear more adaptive to changes.  
Nevertheless, as we now demonstrate, the same adversarial construction applies in this setting as well.  
In particular, we obtain the same linear worst-case bound on the regret as in the fixed-horizon case.  

\begin{corollary}\label{cor:UCBt}
    Fix $T$ such that $T > 4K\ln T$.  
    Then the worst-case regret of UCB with unknown time horizon satisfies the same bound as in~\eqref{eq:ucb_thm}.
\end{corollary}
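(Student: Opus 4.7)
The plan is to reuse the adversarial instance and parameter choices $(c,\Delta)$ from the proof of Theorem~\ref{thm:UCBT} verbatim, and to verify that replacing $\ln T$ by $\ln t$ in the confidence radius only \emph{weakens} the confidence bonus (since $t \le T$ throughout), which works strictly in the adversary's favor in every step of the argument.

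First, I would verify that the water-filling phase carries over unchanged. During rounds $t < Kc+1$ the rewards are identically zero, so every empirical mean is $0$ and $I_k(t) = \sqrt{2\ln t / n_k(t)}$ is, at each fixed round $t$, strictly decreasing in $n_k(t)$. Thus, under the standard initialization in which each arm is pulled once at the outset (which is needed in any case to avoid $\ln 1 / 0$), the same water-filling argument as in Theorem~\ref{thm:UCBT} shows that after $Kc$ rounds each arm has been selected exactly $c$ times, and a uniform tie-break occurs at round $Kc+1$; arm~$1$ is not chosen at that round with probability $1-1/K$.

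Second, I would revisit the induction step of Lemma~\ref{lem:suboptimal_arm_keeps_being_chosen}. For UCB with unknown horizon, locking the suboptimal arm in for $x$ additional pulls requires, at time $Kc+x$, that
\[
\frac{\Delta x}{c+x} + \sqrt{\frac{2\ln(Kc+x)}{c+x}} \;>\; \sqrt{\frac{2\ln(Kc+x)}{c}},
\]
or equivalently,
\[
\frac{\Delta x}{c+x} \;>\; \sqrt{2\ln(Kc+x)} \left(\frac{1}{\sqrt c} - \frac{1}{\sqrt{c+x}}\right).
\]
Since $Kc+x \le T$, we have $\sqrt{2\ln(Kc+x)} \le \alpha := \sqrt{2\ln T}$, so this inequality is implied by the same inequality with $\alpha$ on the right-hand side, which is precisely what the proof of Theorem~\ref{thm:UCBT} established under the condition $\Delta\sqrt c \ge \alpha$.

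Consequently, choosing the same integer $c$ from $\bigl[(T\alpha/(2K))^{2/3},\,(T\alpha/K)^{2/3}\bigr]$ and the same $\Delta = \alpha/\sqrt c$ (feasible under $T > 4K\ln T$, as already established) yields the identical lower bound $(1-1/K)(T-Kc)(1-\Delta)$, and the remaining algebra leading to the right-hand side of~\eqref{eq:ucb_thm} is verbatim. The only subtlety worth flagging is the initialization convention; I would handle it by noting that the $K$ initial exploratory pulls incur no regret under the all-zero pre-breakpoint rewards, so the lower bound is unaffected. In short, the unknown-horizon case is \emph{easier} for the adversary than the known-horizon case, and there is no substantive obstacle beyond this $\ln$-monotonicity observation.
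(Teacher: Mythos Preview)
Your proposal is correct and takes essentially the same approach as the paper: both reduce to the monotonicity observation that $\ln t \le \ln T$ (since $t\le T$), so the lock-in inequality with $\sqrt{2\ln t}$ on the right-hand side is implied by the one with $\sqrt{2\ln T}$ already established in Theorem~\ref{thm:UCBT}, and hence the same $(c,\Delta)$ and regret computation carry over verbatim. Your explicit verification that the water-filling phase and initialization are unaffected is slightly more thorough than the paper's terse argument, but the core idea is identical.
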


\begin{proof}
The key step in the previous proof was choosing $c$ and $\Delta$ such that condition~\eqref{eq:lemma_3} holds.  
This ensures that under the adversarial instance, arm~1 is never chosen after the breakpoint.  
We now show that if condition~\eqref{eq:lemma_3} holds for UCB with known time horizon, then it also holds for UCB with unknown time horizon.  
This implies that using the same choice of $c$ and $\Delta$ yields the same regret bound.  
Indeed,
\begin{align}
    \frac{\Delta x}{c+x} + \sqrt{\frac{2\ln T}{c+x}} 
    &\;\;\geq\;\; \sqrt{\frac{2\ln T}{c}}
    \iff 
    \frac{\Delta x}{c+x} \;\;\geq\;\; \sqrt{2\ln T}\Bigg(\frac{1}{\sqrt{c}} - \frac{1}{\sqrt{c+x}}\Bigg) \\[6pt]
    &\;\;\Longrightarrow\;\; 
    \frac{\Delta x}{c+x} \;\;\geq\;\; \sqrt{2\ln t}\Bigg(\frac{1}{\sqrt{c}} - \frac{1}{\sqrt{c+x}}\Bigg) \\[6pt]
    &\iff 
    \frac{\Delta x}{c+x} + \sqrt{\frac{2\ln t}{c+x}} 
    \;\;\geq\;\; \sqrt{\frac{2\ln t}{c}}, \label{eq:ucbt_condition}
\end{align}
which concludes the proof. \qed
\end{proof}

\section{Periodically Restarted Algorithms}\label{sec:restart}

The simplest way to handle non-stationarity is to periodically restart the algorithm, thereby discarding outdated information. The restart period is a tunable parameter that must balance two opposing considerations: it should not be too short, so that the algorithm has enough time to learn effectively, and it should not be too long, to prevent the algorithm from being influenced by irrelevant past data after a change occurs.

Denote by \( d \) the number of changes the algorithm is designed to handle, so that it restarts every \( T/d \) rounds.  
It was shown in~\cite{auer2002nonstochastic} that for every multi-armed bandit instance with horizon \( T \) and \( K \) arms, there exists an instance for which the regret satisfies
\[
R_T \;\ge\; \tfrac{1}{20}\min\!\left\{\sqrt{KT},\,T\right\}.
\]
For simplicity, we assume that \( \sqrt{KT} \le T \) (equivalently, \( K \le T \)).  
Thus, if the algorithm restarts every \( T/d \) rounds, then for each sub-horizon of length \( T/d \) there exists an instance with regret at least \( \tfrac{1}{20}\sqrt{K(T/d)} \).  
Since there are \( d \) such independent segments, the total regret is at least
\[
d \cdot \tfrac{1}{20}\sqrt{K(T/d)} \;=\; \tfrac{1}{20}\sqrt{KdT}.
\]

\begin{theorem}\label{thm:general_lb_restart}
For \textbf{any} algorithm that is periodically restarted every \( T/d \) rounds, there exists an instance \textbf{with no changes} for which the regret satisfies
\[
R_{wc} \;\ge\; \tfrac{1}{20}\sqrt{KdT}.
\]
\end{theorem}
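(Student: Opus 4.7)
The plan is to reduce to the classical stationary minimax lower bound of \citet{auer2002nonstochastic} segment by segment, via a probabilistic-method argument. The key observation is that periodic restarts cleanly decompose $\pi$ into $d$ independent sub-algorithms, each confined to its own sub-horizon of length $T/d$, and that a single \emph{stationary} instance can simultaneously serve as the hard instance for every sub-algorithm.

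First I would formalize the decomposition: because $\pi$ restarts every $T/d$ rounds, its action at round $t$ depends only on rewards observed since the most recent restart. Hence on segment $i:=\{(i-1)T/d+1,\dots,iT/d\}$ the restarted algorithm is equivalent to running some freshly-initialized algorithm $\pi^{(i)}$ over a horizon of length $T/d$, with no state carried in from earlier segments. For convenience I would assume $d$ divides $T$; otherwise one rounds and absorbs the small loss into the constant, using the monotonicity lemma to pad the horizon if needed.

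Second, I would invoke the stationary lower bound in its probabilistic (Yao-style) form: its proof exhibits a distribution $\mathcal{D}$ over stationary $K$-armed instances of horizon $T/d$ such that $\mathbb{E}_{I\sim\mathcal{D}}[\mathrm{Regret}(\pi',I)] \geq \tfrac{1}{20}\sqrt{K(T/d)}$ for \emph{every} algorithm $\pi'$, provided $T/d \geq K$ (which corresponds to the natural parameter regime $Kd\leq T$). Applying this uniform bound to each $\pi^{(i)}$ yields the same per-segment expected-regret bound, even if the $\pi^{(i)}$ differ across segments. The crucial point is that $\mathcal{D}$ is supported on \emph{stationary} instances, so a single draw $I\sim\mathcal{D}$ is a valid no-change instance for the whole horizon $T$. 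Summing the $d$ per-segment regrets and applying linearity of expectation then gives $\mathbb{E}_{I\sim\mathcal{D}}[\mathrm{Regret}(\pi,I)] \geq \tfrac{1}{20}\sqrt{KdT}$, after which the probabilistic method produces a stationary $I^\star$ achieving the claimed bound.

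The main obstacle is the segment-decomposition step: one must argue carefully that the expected regret of $\pi$ on segment $i$, averaged over $I\sim\mathcal{D}$, really equals the expected regret of a freshly-initialized algorithm on $I\sim\mathcal{D}$ over horizon $T/d$. This requires identifying the restart-induced conditional independence and folding the external randomization $\xi_t$ from Section~\ref{sec:model} into each $\pi^{(i)}$. Once this equivalence is clean, the remainder of the proof is a direct application of the classical bound together with linearity of expectation.
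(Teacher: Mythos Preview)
Your approach is essentially the same as the paper's: decompose the horizon into $d$ segments of length $T/d$, apply the stationary $\tfrac{1}{20}\sqrt{K(T/d)}$ lower bound on each segment, and sum. The paper's argument (given in the paragraph immediately preceding the theorem) is terser and implicitly relies on the natural reading of ``periodically restarted'' as re-running the \emph{same} base algorithm on every segment, so that one fixed hard stationary instance works uniformly across all $d$ segments; your Yao-style averaging over the hard-instance distribution $\mathcal{D}$ is a more careful way to secure the ``no changes'' conclusion and additionally covers the case where the per-segment sub-algorithms $\pi^{(i)}$ could differ, but the underlying idea is identical.
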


Theorem~\ref{thm:general_lb_restart} quantifies the cost of preparing for change.  
Specifically, if an algorithm is tuned to handle \( d \) potential changes, its worst-case regret can be larger by a factor of \( \sqrt{d} \) compared to the stationary case.  
In other words, when no changes actually occur, the price of anticipating them is an unavoidable increase of order \( \sqrt{d} \) in the worst-case regret.

As we have demonstrated, the algorithms ETC, $\epsilon$-greedy, and UCB all exhibit worst-case performance that is significantly worse than $\sqrt{KT}$ when the time horizon is $T$ and reward distributions change over time.  
In particular, each of these algorithms can suffer regret that grows linearly with $T$, even when there is only a single change.  

Suppose that the worst-case regret for a single change is $aT$ for some constant $a>0$.  
If the algorithm is periodically restarted every $T/d$ rounds, then we can accumulate regret of $aT/d$ for each of the $\min\{d, \Gamma_T\}$ segments that contain changes.  
If additional segments remain after all $\Gamma_T$ changes have occurred, then during those remaining periods the algorithm behaves as in the stationary case, incurring regret on the order of $\frac{1}{20}\sqrt{KdT}$.  
Combining these terms yields the following bound.

\begin{theorem}\label{thm:one_change_general_lb_restart}
Suppose an algorithm’s worst-case regret \textbf{with a single possible change} is at least $aT$ for some constant $a>0$.  
Assume there are $\Gamma_T$ changes during the game.  
If the algorithm is periodically restarted every \( T/d \) rounds, then there exists an instance for which the regret is at least: 
\[
 R_{wc}\geq \twopartdef{\tfrac{\Gamma_T}{d}aT+\tfrac{1}{20}(1-\Gamma_T/d)\sqrt{KdT}}{\Gamma_T\leq d}{aT}{\Gamma_T>d}.
\]
\end{theorem}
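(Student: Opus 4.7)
The plan is to exploit the restart structure to decompose the horizon into $d$ independent sub-problems, each of length $h := T/d$. Because the algorithm is restarted between episodes, its behaviour on the $i$-th episode $E_i$ depends only on the history within $E_i$, so within each episode it acts as a freshly initialised algorithm running on horizon $h$. In particular, the expected total regret is lower bounded by the sum of the expected per-episode regrets $R_1,\dots,R_d$, and the adversary can design the instance within each episode essentially independently of the others.

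Within each episode I would have two lower-bound tools at my disposal. First, the single-change hypothesis applied at horizon $h$ furnishes an adversarial instance with at most one breakpoint inside $E_i$ that forces expected regret at least $ah = aT/d$. Second, the stationary minimax lower bound of \cite{auer2002nonstochastic}, which was already invoked in the discussion preceding Theorem~\ref{thm:general_lb_restart}, furnishes a stationary instance on $E_i$ forcing regret at least $\tfrac{1}{20}\sqrt{Kh} = \tfrac{1}{20}\sqrt{KT/d}$.

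Next I split on whether enough changes are available. If $\Gamma_T \le d$, I plant a single-change adversarial instance in $\Gamma_T$ of the episodes (consuming $\Gamma_T$ internal breakpoints in total) and a stationary adversarial instance in the remaining $d-\Gamma_T$ episodes. Summing the per-episode lower bounds and using the identity $(d-\Gamma_T)\sqrt{KT/d} = (1-\Gamma_T/d)\sqrt{KdT}$ yields the target
\[
\sum_{i=1}^d R_i \;\ge\; \frac{\Gamma_T aT}{d} + \tfrac{1}{20}\bigl(1-\Gamma_T/d\bigr)\sqrt{KdT}.
\]
If $\Gamma_T > d$, I plant a single-change instance in every episode (using only $d \le \Gamma_T$ internal breakpoints; the remaining $\Gamma_T-d$ can be placed arbitrarily, which by monotonicity of worst-case regret in the number of breakpoints can only increase regret). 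The resulting total is at least $d\cdot(aT/d)=aT$.

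The main obstacle is the bookkeeping of breakpoints at episode boundaries: if the reward distribution used at the end of $E_i$ differs from the one used at the start of $E_{i+1}$, each junction itself counts as an additional change point, potentially pushing the total breakpoint count above $\Gamma_T$. I would handle this by observing that both per-episode lower bounds leave enough flexibility in permuting arm labels and choosing which arm plays the ``background'' role to make consecutive episodes distributionally compatible at the junctions, so that no breakpoints beyond those already counted inside the episodes need be introduced. Once this gluing is verified, the constructed instance has at most $\Gamma_T$ breakpoints and the per-episode sums above deliver the claimed inequality in each case.
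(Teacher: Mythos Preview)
Your approach is essentially identical to the paper's: the paper's entire argument is the short paragraph immediately preceding the theorem, which decomposes the horizon into $d$ restart segments, applies the single-change $aT/d$ bound to $\min\{d,\Gamma_T\}$ of them, and the stationary $\tfrac{1}{20}\sqrt{K(T/d)}$ bound to the remaining ones. You have reproduced this exactly and in fact gone further by flagging the boundary-breakpoint bookkeeping issue, which the paper does not mention at all.
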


Theorem~\ref{thm:one_change_general_lb_restart} formalizes how periodic restarts interact with environmental changes.  
When the number of restarts $d$ matches or exceeds the number of true changes $\Gamma_T$, the algorithm incurs regret dominated by the stationary lower bound $\Omega(\sqrt{KdT})$.  
Conversely, if changes are more frequent than the restart rate, the regret grows linearly in $T$ at rate proportional to $a/d$.  
This result highlights the delicate balance between robustness to change and efficiency in stationary intervals.

To conclude this section, based on our analysis of ETC, $\epsilon$-greedy, and UCB, we obtain the following results.

\begin{corollary}
    Suppose $\mbox{ETC}_m$, $\epsilon$-greedy, and UCB are restarted every $T/d$ rounds. Then:
    \begin{align*}
        &R_{wc}(\mbox{ETC}_m,\Gamma_T,\mbox{restart}) \geq  \tfrac{\min\{d,\Gamma_T\}}{d}(1-1/K)T+\tfrac{1}{20}(1-\Gamma_T/d)\sqrt{KdT}1_{\{\Gamma_T \leq d\}}\\
        &R_{wc}(\epsilon-\mbox{greedy},\Gamma_T,\mbox{restart}) \geq  0.125\tfrac{\min\{d,\Gamma_T\}}{d}T+\tfrac{1}{20}(1-\Gamma_T/d)\sqrt{KdT}1_{\{\Gamma_T \leq d\}}\\
        & R_{wc}(\mbox{UCB},\Gamma_T,\mbox{restart}) \geq  0.07\tfrac{\min\{d,\Gamma_T\}}{d}(1-1/K)T+\tfrac{1}{20}(1-\Gamma_T/d)\sqrt{KdT}1_{\{\Gamma_T \leq d\}}.
    \end{align*}
\end{corollary}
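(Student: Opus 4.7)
The plan is to observe that each of the three bounds stated in the corollary is a direct instantiation of Theorem~\ref{thm:one_change_general_lb_restart} obtained by substituting the single-change linear constant $a$ already established for the corresponding algorithm. From Theorem~\ref{thm:ETC-K} I read off $a=(1-1/K)$ for $\mbox{ETC}_m$; from Theorem~\ref{thm:epsilon_greedy} I take $a=1/8$ for $\epsilon$-greedy; and from Theorem~\ref{thm:UCBT}, together with Corollary~\ref{cor:UCBt} to cover the unknown-horizon variant, I use $a=0.07(1-1/K)$ for UCB.

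Next, I would verify that the compact single-line expression appearing in the corollary is equivalent to the two-case expression in Theorem~\ref{thm:one_change_general_lb_restart}. When $\Gamma_T\le d$, the indicator $\mathbf{1}_{\{\Gamma_T\le d\}}$ equals $1$ and $\min\{d,\Gamma_T\}=\Gamma_T$, reproducing the first branch $\tfrac{\Gamma_T}{d}aT+\tfrac{1}{20}(1-\Gamma_T/d)\sqrt{KdT}$. When $\Gamma_T>d$, the indicator vanishes and $\min\{d,\Gamma_T\}=d$, reproducing the second branch $aT$. Substituting the three values of $a$ into this unified expression then produces exactly the three inequalities claimed.

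The main obstacle is a small bookkeeping point rather than a new argument: Theorem~\ref{thm:one_change_general_lb_restart} is applied segment by segment on sub-horizons of length $T/d$, so the single-change linear lower bound invoked for $a$ must itself hold on windows of that length. This is automatic for $\mbox{ETC}_m$ provided $T/d\ge K$, and for $\epsilon$-greedy provided $T/d>0$; for UCB it requires the mild condition $T/d>4K\ln(T/d)$ inherited from Theorem~\ref{thm:UCBT}. Once these conditions are noted, no additional calculation is needed, and the corollary follows by direct substitution.
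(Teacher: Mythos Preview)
Your proposal is correct and matches the paper's intent: the corollary is stated there without an explicit proof, as an immediate consequence of Theorem~\ref{thm:one_change_general_lb_restart} together with the single-change constants from Theorems~\ref{thm:ETC-K}, \ref{thm:epsilon_greedy}, and~\ref{thm:UCBT}. One tiny correction on the bookkeeping: for $\mbox{ETC}_m$ the per-segment hypothesis from Theorem~\ref{thm:ETC-K} is $m\le (T/d)/K$, not merely $T/d\ge K$; otherwise your argument is exactly what is needed.
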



\section{Discussion and Future Work}
\label{sec:discussion}

The belief inertia argument highlights a fundamental limitation of learning under change: once an algorithm has formed strong empirical beliefs, it becomes resistant to new evidence, leading to delayed adaptation.  
In this paper, we analyzed this effect for classical algorithms whose dynamics are well understood.  
A natural next step is to examine whether the same argument can be extended to algorithms \emph{specifically designed} for non-stationarity.

One direction is to study algorithms that rely on \emph{sliding windows} or \emph{discounting} to forget outdated data~\cite{garivier2011on}.  
These methods trade memory for responsiveness: small windows or strong discounting increase variance, while large windows or weak discounting preserve old information and reintroduce inertia.  
Our framework may be able to formalize this trade-off by showing that even with active forgetting, belief inertia persists whenever past data retain enough influence to anchor decisions.


%
%
%



\bibliographystyle{informs2014} 
\bibliography{mybib} 





\end{document}

\subsection{Explore Then Commit with no changes}
What is the worst case for ETC when there are no changes? This will depend on $K$ and whether the arms are stochastic.

If there are 2 arms and rewards are deterministic, we get
$$Regret_{wc}(\mbox{ETC}_m)=\Delta m$$
which can be very small. This is not a hard instance. So we have to look at stochastic.

Suppose there are 2 Bernoulli arms with $0\leq p_1<p_2\leq 1$ and $\Delta=p_2-p_1$. We have:
\begin{align}
    Average - Regret_{wc}(\mbox{ETC}_m)=\Delta m +\Delta (T-2m)\mathbb{P}\left(\sum_{i=1}^mX_i>\sum_{i=1}^mY_i\right)=
\end{align}

\begin{align*}
    \mathbb{P}\left(\sum_{i=1}^mX_i>\sum_{i=1}^mY_i\right)=\mathbb{P}\left(\sum_{i=1}^m\left( X_i-Y_i \right)>0\right)=\mathbb{P}\left(\frac{1}{m}\sum_{i=1}^m\left( X_i-Y_i -(p_1-p_2)\right)>-(p_1-p_2)\right)=\mathbb{P}\left(\frac{1}{m}\sum_{i=1}^m Z_i>\Delta\right)
\end{align*}

\begin{algorithm}[!t]
	\footnotesize
    \begin{algorithmic}[1]
    \Function{InitGreedy}{$K$}
        \For{$k$ from $1$ to $K$}
           \State  $\hat{x}_k \gets 0$
           \Comment{Initialize arm reward empirical averages}
           \State  $n_k \gets 0$
           \Comment{Initialize arm choice counts}
        \EndFor
    \EndFunction
    \\\hrulefill
    
     \Function{PlayGreedy}{$T$,$\epsilon$}
     \For{$t$ from $1$ to $K$} 
     \Comment{Choose each arm once}
     \State  $\hat{x}_t \gets X_t(t)$
     \Comment{Update arm reward empirical averages}
    \State  $n_t \gets 1$
     \Comment{Update arm choice count}
     \EndFor
     
     \For{$t$ from $K+1$ to $T$} 
     \State Toss a coin with success probability $1-\epsilon$
     \If{\emph{success}} 
     \State $k_t \gets \argmax_k\{\hat{x}_k \}$
     \Comment{Exploit, choose arbitrarily in case of a tie}
     
     \Else
     \State $k_t \gets$ arm chosen uniformly at random
    \Comment{Explore}
    \EndIf
     \State $n_{k_t} \gets n_{k_t}+1$
     \Comment{Update chosen arm count}
     \State $\hat{x}_{k_t} \gets (1-1/n_{k_t})\hat{x}_{k_t}+(1/n_{k_t})X_{k_t}(t)$
     \Comment{Update chosen arm empirical average}
     
     \EndFor
     \EndFunction

    \end{algorithmic}   
	\normalsize
    \caption{ --- $\epsilon$-greedy}
    \label{alg:epsilon_greedy}
    
\end{algorithm}

\begin{algorithm}[!t]
	\footnotesize
    \begin{algorithmic}[1]
    \Function{InitUCBT}{$K$}
        \For{$k$ from $1$ to $K$}
           \State  $\hat{x}_k \gets 0$
           \Comment{Initialize arm reward empirical average}
           \State  $n_k \gets 0$
           \Comment{Initialize arm choice count}
        \EndFor
    \EndFunction
    \\\hrulefill
    
     \Function{PlayUCBT}{$T$}
     \For{$t$ from $1$ to $K$} 
     \Comment{Choose each arm once}
     \State  $\hat{x}_t \gets X_t(t)$
     \Comment{Update arm reward empirical average}
    \State  $n_t \gets 1$
     \Comment{Update arm choice count}
     \EndFor
     
     \For{$t$ from $K+1$ to $T$} 
     \State $k_t \gets \argmax_k\{\hat{x}_k +\sqrt{2\ln{T}/n_t}\}$
     \Comment{Choose arbitrarily in case of a tie}
     \State $n_{k_t} \gets n_{k_t}+1$
     \Comment{Update chosen arm count}
     \State $\hat{x}_{k_t} \gets (1-1/n_{k_t})\hat{x}_{k_t}+(1/n_{k_t})X_{k_t}(t)$
     \Comment{Update chosen arm empirical average}
     
     \EndFor
     \EndFunction

    \end{algorithmic}   
	\normalsize
    \caption{ --- UCB}
    \label{alg:UCBT}
    
\end{algorithm}

\begin{algorithm}[!t]
    \footnotesize
    \begin{algorithmic}[1]
    \Function{InitSWUCB}{$K$}
       \State  $Q \gets$ \Call{Queue}{}
       \Comment{Initialize queue to store choices and rewards within the window}
        \For{$k$ from $1$ to $K$}
           \State  $\hat{x}_k \gets 0$
           \Comment{Initialize arm reward empirical average}
           \State  $n_k \gets 0$
           \Comment{Initialize arm choice count}
        \EndFor
    \EndFunction
    \\\hrulefill
    
     \Function{PlaySWUCB}{$T, \tau$}
     \For{$t$ from $1$ to $T$} 
     \If{$t\leq K$}
        \State $k_t \gets t$
        \Comment{Choose each arm once}
        \Else
         \State $k_t \gets \argmax_k\left\{\hat{x}_k +\sqrt{\frac{2\ln(N_t)}{n_k\vee 1}}\right\}$
            \Comment{Choose arbitrarily in case of a tie}
     \EndIf
     \State $n_{k_t} \gets n_{k_t} + 1$
     \Comment{Update chosen arm count}
     \State $\hat{x}_{k_t} \gets (1-1/n_{k_t})\hat{x}_{k_t}+(1/n_{k_t})X_{k_t}(t)$
     \Comment{Update chosen arm empirical average}
     \State \Call{Enqueue}{$Q, (k_t, X_{k_t}(t))$}
     \Comment{Store choice and reward in queue}
     \If{\Call{Size}{$Q$} $> \tau$}
        \State $a, r \gets$ \Call{Dequeue}{$Q$}
        \Comment{Remove oldest arm and reward if window size exceeded}
        \State $n_{k_t} \gets n_{k_t} - 1$
        \Comment{Adjust arm choice count}
        \State $\hat{x}_{k_t} \gets (1+1/n_{k_t})\hat{x}_{k_t} - (1/n_{k_t})r$
        \Comment{Remove the oldest observation from the average calculation}
     \EndIf
     \EndFor
     \EndFunction

    \end{algorithmic}   
    \normalsize
    \caption{ --- Sliding Window UCB (SWUCB)}
    \label{alg:SWUCB}
\end{algorithm}

\begin{algorithm}[!t]
    \footnotesize
    \begin{algorithmic}[1]
    \Function{InitDUCB}{$K$}
        \State $N \gets 0$
        \Comment{Initialize sum of counts}
        \For{$k$ from $1$ to $K$}
           \State  $\hat{x}_k \gets 0$
           \Comment{Initialize arm reward empirical discounted average}
           \State $v_k \gets 0$
           \Comment{Initialize arm cumulative discounted reward}
           \State  $n_k \gets 0$
           \Comment{Initialize arm discounted choice count}
        \EndFor
    \EndFunction
    \\\hrulefill
    
     \Function{PlayDUCB}{$T, \lambda$}
     \For{$t$ from $1$ to $T$} 
     \If{$t\leq K$}
        \State $k_t \gets t$
        \Comment{Choose each arm once}
        \Else
         \State $k_t \gets \argmax_k\left\{\hat{x}_k +\sqrt{\frac{2\ln(N_t)}{n_k\vee 1}}\right\}$
            \Comment{Choose arbitrarily in case of a tie}
     \EndIf
     \State $N \gets \lambda N + 1$
     \Comment{Update sum of counts}
     \For{$i$ from $1$ to $K$}
        \State $n_{i} \gets \lambda n_{i}$
        \Comment{Multiply arm counts by the discount factor}
        \State $v_i \gets \lambda v_i$
        \Comment{Multiply cumulative reward by the discount factor}
        \If{$i=k_t$}
            \State $n_{i}\gets n_{i}+1$
            \Comment{Update chosen arm count}
            \State $v_i \gets v_i + X_i(t)$
            \Comment{Update cumulative discounted reward}
        \EndIf
     \EndFor
     \State $\hat{x}_{k_t} \gets v_{k_t}/n_{k_t}$
     \Comment{Update chosen arm empirical average}
     \EndFor
     \EndFunction

    \end{algorithmic}   
    \normalsize
    \caption{ --- Discounted UCB (DUCB)}
    \label{alg:DUCB}
\end{algorithm}

\subsection{Sliding Window UCB}

The Sliding Window UCB (SWUCB) algorithm is a variant of UCB specifically designed for non-stationary environments.  
Rather than using all past observations, SWUCB computes the index of each arm from only the most recent $\tau$ rounds, where $\tau$ is a user-specified window size.  
By “forgetting’’ older samples, the algorithm can adapt more quickly when reward distributions change.

Formally, recall that $X_k(t)$ denotes the reward of arm $k$ if it is chosen at time $t$, and $k_t$ is the arm actually played.  
The number of times arm $k$ was chosen in the last $\tau$ rounds is
\[
n_k(\tau,t) \;=\; \sum_{s=\max\{t-\tau+1,1\}}^t 1_{\{k_s=k\}}.
\]
The corresponding empirical mean reward of arm $k$ is
\[
\hat{x}_k(\tau,t) \;=\; \frac{1}{\max\{n_k(\tau,t),1\}}
\sum_{s=\max\{t-\tau+1,1\}}^t X_k(s)\,1_{\{k_s=k\}}.
\]
Finally, the confidence term is defined as
\[
c_k(\tau,t) \;=\; \sqrt{\frac{2\ln(\min\{\tau,t\})}{\max\{n_k(\tau,t),1\}}},
\]
so that the index of arm $k$ at time $t$ is
\[
I_k(\tau,t) \;=\; \hat{x}_k(\tau,t) + c_k(\tau,t).
\]
 
The forgetting mechanism makes SWUCB harder to fool than standard UCB.  
In particular, one cannot rely on old samples to anchor a suboptimal arm indefinitely, since their influence disappears after $\tau$ rounds.  
Any adversarial construction must therefore operate within the window length $\tau$, rather than exploiting the entire horizon.  

When $\tau$ is large, the algorithm collects enough data to learn stationary rewards reliably, but it adapts more slowly to changes and remains vulnerable to belief inertia.  
When $\tau$ is small, the algorithm is highly responsive to changes but lacks sufficient data to form accurate estimates.  
Thus, for large $\tau$, belief inertia can still be exploited much like in UCB, though only over a shorter effective horizon.  
For small $\tau$, this mechanism is no longer effective; however, the small number of samples within the window already leads to poor learning and consequently high regret.  

\paragraph{Methodology.} Our strategy for proving lower bounds on SWUCB therefore has two parts.  
First, we show that for \emph{any instance} (not just adversarial ones), \gm{edit. also say something that this is inherent price in the passive method} with $\Delta$ denoting the difference between the best and second best reward means, SWUCB suffers regret of order $\Omega(K\Delta T/\tau)$.  
This covers the regime of small $\tau$, where limited information within each window prevents accurate learning.  
Second, we build on our analysis of UCB to construct an explicit adversarial instance in which the worst-case regret is of order $\Omega(\Gamma \tau + KT/\tau)$.

\paragraph{Implications.} First, optimizing over $\tau$ in the regret bound reveals that for \emph{any choice of window size}, the worst-case regret is of order $\Omega\!\left(\sqrt{\Gamma K T}\right)$. Second, recall that the authors of~\cite{} establish an upper bound on the regret of SWUCB that depends on $\tau$, and show that setting 
\[
\tau = \Theta\!\left(\sqrt{\tfrac{T\ln T}{\Gamma_T}}\right)
\]
minimizes this upper bound, yielding regret of order $O\!\left(\sqrt{\Gamma_T T \ln T}\right)$.  
Our general lower bound demonstrates that for this very choice of $\tau$, the regret must also be at least 
\[
\Omega\!\left(\sqrt{\Gamma_T T \ln T}\right),
\] 
thereby identifying the correct order of the regret of SWUCB using the "upper bound optimal" $\tau$.  

Third, our general lower bound allows us to quantify the price of parameter mis-specification.  
Clearly, a good choice of $\tau$ must take into account the time horizon, the number of arms, and the number of changes.  
Since the latter is typically unknown in advance, the decision maker must predict how many changes to plan for and choose $\tau$ accordingly.  

Suppose the decision maker plans for $\delta$ changes, and thus sets 
\[
\tau \;=\; \sqrt{\tfrac{T \ln T}{\delta}}.
\]
Plugging this into our general lower bound yields
\[
\Omega \!\left(
\sqrt{\Gamma_T T \ln T}
\Bigg( \sqrt{\tfrac{\Gamma_T}{\delta}}
+ \sqrt{\tfrac{\delta}{\Gamma_T}} \cdot \tfrac{K}{\ln T}
\Bigg)\right).
\]

Therefore, the price of parameter mis-specification is dominated by the square root of the ratio between the planned number of changes and the actual number of changes.  
Moreover, depending on the relation between $K$ and $\ln T$, planning for more changes than actually occur is not symmetric with planning for fewer.  

For example, if $K=10$ and $T=780$, then $K / \ln T \approx 0.5$.  
In this case, underestimating the number of changes (say, $\delta = \Gamma_T/4$) incurs an additional factor of $\sqrt{4}=2$ in the regret, while overestimating the number of changes (say, $\delta = 4\Gamma_T$) incurs only an additional factor of about $\sqrt{4}\cdot 0.5 = 1$.  
Thus, when $K/\ln T$ is small, planning for too few changes is substantially more costly than planning for too many.  
This asymmetry highlights the sensitivity of SWUCB to parameter choices and quantifies the “price’’ of mis-specification.  


\paragraph{Results.} We begin with establishing the general lower bounds for SWUCB.

We begin with an important result concerning SWUCB.

\begin{lemma}\label{lem:SWUCB_consecutive}
    The maximum number of rounds that any arm can be chosen consecutively under SWUCB is at most $\frac{\tau}{K-1}+\left(1+\frac{1}{\sqrt{2\log \tau}-1}\right)^2$. 
\end{lemma}

\begin{proof}
\gm{this is true only for not initial window} \gm{what about the following theorem: suppose rewards are deterministic 1 0 0 0 0 0 .. then the max and min n's for all the 0's are never more than 2 apart. not sure with a sliding part...} Fix an arm $k$ and consider a round at which it begins to be chosen consecutively.  
Looking at the sliding window that ends in this round, it contains exactly $\tau$ observations.  
Denoting by $n_i$ the number of times arm $i$ was chosen during this window, we have $\sum_{i=1}^K n_i = \tau$.
Hence, for the remaining $K-1$ arms, $\sum_{i \neq k} n_i \leq \tau$,
which implies that there exists at least one arm $j \neq k$ such that 
\[
n_j \leq \frac{\tau}{K-1}.
\]

After $n_j$ additional rounds in which arm $j$ is not played, its index is
\[
I_j(\tau,t) = \sqrt{2\log \tau}.
\]
Meanwhile, the index of arm $k$ can be at most \gm{this argument is wrong as is}
\[
I_k(\tau,t) \leq 1 + \sqrt{\frac{2\log \tau}{n_j}}.
\]

Clearly, if $n_j$ increases enough, we would have $I_k(\tau,t)<I_j(\tau,t)$, breaking the streak. We have:
$$I_k(\tau,t)<I_j(\tau,t) \iff 1 + \sqrt{\frac{2\log \tau}{n_j}} <  \sqrt{2\log \tau} \iff n_j> \left(1+\frac{1}{\sqrt{2\log \tau}-1}\right)^2 .$$
Notice that the left hand side is a decreasing function of $\tau$. For example, if we substitute $\tau=2$ we obtain that if $n_j\leq 45$ then arm $j$ is chosen over arm $k$ which breaks the streak. Choosing $\tau=10$ yields $n_j\leq 4$. We conclude that the time an arm can be consecutively selected is bounded by the number of rounds it takes another arm's index to reach $\sqrt{2\log \tau}$, which is at most $\frac{\tau}{K-1}$ rounds, plus the number of rounds needed to flip the index order, which is at most $\left(1+\frac{1}{\sqrt{2\log \tau}-1}\right)^2$, concluding the proof.
    \qed
\end{proof}

\begin{theorem}\label{thm:SWUCB_general_lb}
    The regret of SWUCB under any instance satisfies:
    \begin{equation}\label{eq:ucb_thm}
      R(\mbox{SWUCB})\geq \frac{\Delta T }{  \frac{\tau}{K-1}+\left(1+\frac{1}{\sqrt{2\log \tau}-1}\right)^2}.
    \end{equation}   
\end{theorem}

\begin{proof}
    Consider the optimal arm, and without the loss of generality assume it is arm 1. From Lemma \ref{lem:SWUCB_consecutive} we know that there is a bound on the number of consecutive times arm 1 can be chosen. Dividing $T$ by this number gives the minimum number of times a suboptimal arm must be chosen. At those time, an average regret of $\Delta$ is accumulated, concluding the proof.

    \qed
\end{proof}


\section{Simulations}